\documentclass{article} 
\usepackage{iclr2025_conference,times}

\usepackage{amsmath,amsfonts,bm}

\def\eqref#1{equation~\ref{#1}}

\def\1{\bm{1}}

\def\rmY{{\mathbf{Y}}}

\def\vy{{\bm{y}}}

\DeclareMathAlphabet{\mathsfit}{\encodingdefault}{\sfdefault}{m}{sl}
\SetMathAlphabet{\mathsfit}{bold}{\encodingdefault}{\sfdefault}{bx}{n}

\usepackage{hyperref}
\usepackage{url}

\usepackage{paralist}
\usepackage{enumitem}

\renewcommand{\v}[1]{\ensuremath{\mathbf{#1}}}
\newcommand{\stitle}[1]{\vspace{0mm} \noindent {\bf #1}}
\usepackage{CJKutf8}

\newcommand{\eat}[1]{}
\usepackage{booktabs}       
\usepackage{xcolor}         
\usepackage{xspace}         
\usepackage[normalem]{ulem}
\useunder{\uline}{\ul}{}

\usepackage{graphicx} 
\usepackage{amsthm}
\usepackage{amsmath}
\usepackage{wrapfig}

\usepackage{multirow}

\newcommand{\ie}{\emph{i.e.}\xspace} 

\newcommand{\vpara}[1]{\vspace{0.05in}\noindent\textbf{#1 }}
\newcommand{\model}{\textbf{\textsc{LIP}}\xspace}

\newtheorem{definition}{Definition}
\newtheorem{theorem}{Theorem}

\title{Multi-Label Node Classification with Label Influence Propagation}

\author{Yifei Sun\textsuperscript{\rm 1}, 
Zemin Liu\textsuperscript{\rm 1}\thanks{Corresponding authors.} ,
Bryan Hooi\textsuperscript{\rm 2},
Yang Yang\textsuperscript{\rm 1}\footnotemark[1] ,
Rizal Fathony\textsuperscript{\rm 3},
Jia Chen\textsuperscript{\rm 4},
Bingsheng He\textsuperscript{\rm 2}\footnotemark[1]\\
\textsuperscript{\rm 1}Zhejiang University, 
\textsuperscript{\rm 2}National University of Singapore,\\
\textsuperscript{\rm 3}Capital One,
\textsuperscript{\rm 4}GrabTaxi Holdings Pte. Ltd.\\
\texttt{\{yifeisun, liu.zemin, yangya\}@zju.edu.cn,}\\
\texttt{\{bhooi, hebs\}@comp.nus.edu.sg,}\\
\texttt{rizal.fathony@capitalone.com,}
\texttt{jia.chen@grab.com}
}

\iclrfinalcopy 
\begin{document}

\maketitle

\begin{abstract}
Graphs are a complex and versatile data structure used across various domains, with possibly multi-label nodes playing a particularly crucial role. 
Examples include proteins in PPI networks with multiple functions and users in social or e-commerce networks exhibiting diverse interests. 
Tackling multi-label node classification (MLNC) on graphs has led to the development of various approaches. Some methods leverage graph neural networks (GNNs) to exploit label co-occurrence correlations, while others incorporate label embeddings to capture label proximity. However, these approaches fail to account for the intricate influences between labels in non-Euclidean graph data.
To address this issue, we decompose the message passing process in GNNs into two operations: \emph{propagation} and \emph{transformation}. 
We then conduct a comprehensive analysis and quantification of the influence correlations between labels in each operation. 
Building on these insights, we propose a novel model, Label Influence Propagation (\model). 
Specifically, we construct a label influence graph based on the integrated label correlations. 
Then, we propagate high-order influences through this graph, dynamically adjusting the learning process by amplifying labels with positive contributions and mitigating those with negative influence.
Finally, our framework is evaluated on comprehensive benchmark datasets, consistently outperforming SOTA methods across various settings, demonstrating its effectiveness on MLNC tasks\footnote{Our code is available at \url{https://github.com/Xtra-Computing/LIP_MLNC}.}.
\end{abstract}
\section{Introduction} \label{sec:intro}

Graphs, as a complex data structure, are prevalent across various fields~\citep{Jiang2019SemiSupervisedLW,kipf2016variational,ying2018graph,liu2023survey,Fang2024ExploringCO}. Among these, graphs with multi-label nodes are common and of great importance. For instance, proteins in ogbn-protein dataset have multiple functions~\citep{hu2020open}. Accurately identifying all the functions can assist with understanding biological processes and advancing biomedical research. Thus, we focus on this realistic but challenging problem named multi-label node classification on graphs, which we abbreviate as MLNC in the following paper. 


\stitle{Prior studies.}
Current methods typically adopt three strategies to address this problem.
The \emph{first strategy} is to neglect the multi-label information and predict the labels without mining label correlations~\citep{shi2020masked,li2023gipa}.
The \emph{second strategy} is to explicitly treat labels as a new type of node and incorporate them into the original graph, thereby enhancing task performance through propagation and aggregation information between nodes and label nodes~\citep{gao2019semisupervised,8894391}. 
Since only incomplete connections between nodes and label nodes are available in the training set, the \emph{third strategy} is to integrate label representations into the neighbor aggregation and classification processes, thereby improving the utilization of multi-label information~\citep{ZHOU2021115063,XIAO2022235}. 
However, these strategies underestimate the complex label correlations in graph data, only assessing label proximity without modeling their influence, thereby failing to fully utilize these correlations to improve MLNC.

\stitle{Observations.}
Our observations (Fig.~\ref{fig:intro}) further show that, for graph data, different labels can mutually enhance or harm each other's performance. 
Specifically, we compare the performance differences when training a graph neural network (GNN) using both row and column labels simultaneously versus training with only column labels individually. A positive difference indicates that jointly training with row and column labels outperforms training with column label alone, implying that the row label can enhance the column label. Conversely, a negative difference suggests a negative influence. As shown in Fig.~\ref{fig:intro}, most graph datasets exhibit both positive and negative influence between labels. Therefore, in this paper we aim to analyze and quantify these complex label influences on graphs to enhance or mitigate the positive or negative effects.

\begin{figure*}[!t]
    \centering
    \includegraphics[width=1.0\textwidth]{figs/Intro_new.pdf}
    \setlength{\abovecaptionskip}{-0.06cm} 
    \setlength{\belowcaptionskip}{-0.4cm} 
    \caption{Observations showing positive (red) and negative (blue) influence between different labels. Each value in the heat maps\protect\footnotemark\ represents the performance on the column label when trained together with the row label, minus the result of training the column label individually.}
    \label{fig:intro}
\end{figure*}

\footnotetext{These heat maps are asymmetric because each value represents the degree to which the row label helps ($+$) or harms ($-$) the column label. Without loss of generality, we selected the first 10 labels as observation targets (except for DBLP) due to space limitations.}

\stitle{Challenges.}
However, given the intricate nature of graph, several challenges emerges. 
First, the influences existing not only between labels themselves but between nodes entangled together through graph structure. Thus, \emph{how to quantify the influences between labels on the complex non-Euclidean graph data} is the first challenge. 
Second, since any label may exert both positive and negative influences on multiple other labels simultaneously, the second challenge is \emph{how to capture the high-order influence correlations between all labels}. 
Finally, our goal is to encourage or suppress labels that bring positive or negative influences, respectively. Thus, \emph{how to leverage the quantified high-order influence correlations to enhance the performance of MLNC task} is the third challenge.


\stitle{Present work.}
To address the \emph{first} challenge,
we decompose the message passing into propagation and transformation operations~\citep{zhang2022model}, allowing for a detailed analysis of the label influence correlation on graph data. 
Moreover, we provide a theoretical analysis, grounded in the inductive biases inherent in graph models~\citep{xu2018representation,wang2020unifying}, revealing that the graph structure itself is a key driver of label influence during propagation operation. 
As for transformation operation, our analysis revealed that the dynamic interactions between labels during training are caused during each back propagation on model parameters. 
The above two parts of the label influence analysis not only provide a deep insight into multi-label correlations on graph data but also lay a solid foundation for improving MLNC on graphs. 

To tackle the \emph{second} challenge, we construct a high-order label influence propagation graph based on the pair-wise correlations. Thus, we can quantify the propagated influence from one label to any other labels on graph data.
For the \emph{third} challenge, we propose a novel method, \underline{L}abel \underline{I}nformation \underline{P}ropagation (\model), which contains the above quantification steps and crafts the training dynamics among multiple labels to improve the overall performance.
By calculating the importance of each label in this propagation graph, we dynamically adjusted its learning proportion throughout the training process. This ensures that labels with more positive influence on others are learned more effectively, while minimizing negative interactions between labels, ultimately enhancing the performance of MLNC.
As a plug-and-play approach, our method can be applied to various GNN backbones. We validated its effectiveness using multiple settings on the most comprehensive collection~\citep{zhao2023multi} of MLNC datasets.


\stitle{Contributions.}
To summarize, our contributions are three-fold.
\begin{itemize}[leftmargin=*]
\item We offer a novel perspective on label correlation analysis by dissecting the pipeline into forward and backward propagation segments, where we conduct a thorough analysis and quantification of the influence correlation between labels.

\item Building on the quantified pairwise label influence, we design \model, which calculates high-order label correlation and the influence propagation between labels to dynamically guide the learning process of the model.

\item Empirically, we validate the superiority of our method on MLNC graph data across various domains, showing improvements (AUC) of 3.06\% and 3.42\% on average under the node split and label split settings, respectively.
\end{itemize}
\section{Related Work} \label{sec:related}

\vpara{Multi-label Node Classification (MLNC).}
To promote research on MLNC, MLGNC~\citep{zhao2023multi} provides three biological datasets and performs a detailed analysis on both datasets and existing methods.
From the methodology perspective, some methods consider using textual embeddings of labels to incorporate label proximity.
LARN~\citep{XIAO2022235} incorporates the all label embeddings during the neighbor aggregation process. Similarly, LANC~\citep{ZHOU2021115063} proposes attention mechanism to aggregates the label embeddings with each node.
Another line of work construct additional label-label network to use the label co-occurrence correlations.
ML-GCN~\citep{gao2019semisupervised} construct both node-centric and label-centric graphs to aggregate information, while MLNE~\citep{8894391} use random walk to get embedding on both label-label network and original graph.
There are also some methods that modify the approach to modeling graphs from multi-label perspective.
MLGW~\citep{8970680} leverage label-specific agents to walk on graphs. VariMul~\citep{10.1145/3459637.3482391} utilizes VGAE to derive node and label embeddings and designs a confidence ranking loss to mine pairwise label correlations.
Recent work improves MLNC by addressing the ambiguity in multi-label graph structures~\citep{bei2024correlation} and the limited expressive power for multi-label tasks~\citep{zhao2024gnn}.
However, these work do not explicitly analyze and quantify the influences between labels and exploit them to enhance the MLNC task.

\vpara{Decoupled GNNs.}
Compared to conventional GNNs, decoupled GNNs explicitly isolate the two operations and aggregate all propagation (P) and transformation (T) operations separately for processing.
APPNP~\cite{gasteiger2018predict} and DAGNN~\cite{liu2020towards} first propose that decoupling the two operations can enable GNNs to go much deeper without causing over-smoothing. Although SGC~\cite{wu2019simplifying} and lightGCN~\cite{he2020lightgcn} do not discuss the decouple paradigm, it demonstrate superior efficiency and performance. PTA~\cite{dong2021equivalence} further explore the paradigm from the perspective of label propagation. 
Since the APPNP is more suitable for homophily graphs, GPRGNN~\cite{chien2020adaptive} improves it by adaptively learning GPR weights. This method not only avoids over-smoothing but also enables adaptive filtering to handle heterophily graphs.
NCGNN~\cite{chen2024neighborhood} improves model capacity and training efficiency by pre-propagating node features, then employs a CNN-based approach to aggregate the propagated features, adaptively capturing contextual information to boost the performance.
Our work is inspired by above studies that decouple or disentangle~\cite{zhang2022model} the message passing process. 
We focus more on analyzing the influence correlations between labels during the P and T operations, rather than designing specific P and T operations to increase model performance~\cite{gasteiger2018predict,chien2020adaptive,ijcai2022p310} or enhance model capacity and scalability~\cite{chen2024neighborhood}.









\section{Preliminaries}  


\subsection{Problem Formulation} \label{sec:notions}
Given a graph \( G = (V, E) \) with the adjacency matrix \( \v{A} \in \mathbb{R}^{n \times n} \), where \( n = |V| \) and \( e = |E| \) are the total number of nodes and edges. Each node \( v_i \in V \) is associated with a feature vector \( \v{x}_i \in \mathbb{R}^f \), and all these feature vectors constitute the feature matrix \( \v{X} \in \mathbb{R}^{n \times f} = [\v{x}_1, \v{x}_2, \ldots, \v{x}_n]^T \) of the graph \( G \). 
Among all the nodes, \( n_l \) of \( n \) are labeled while the remaining \( n_u \) of \( n \) are not. 
For every node \( v_i \) in the labeled dataset \( D_l = \{ (v_i, y^{\text{seq}}_i) \ | \ 1 \leq i \leq n_l \)\}, it also contains a set of labels \( y^{\text{seq}}_i \subseteq \mathbf{Y} \) drawn from the given label space \( \mathbf{Y} = \{ y_1, y_2, \ldots, y_k \} \) with \( k \) possible class labels at most. Furthermore, the label sets for the rest of the unlabeled nodes \( D_u = \{ v_i \ | \ n_l + 1 \leq i \leq n_l + n_u \} \) are not available. The goal of MLNC on graphs under the semi-supervised learning setting is to learn a model \( h: V \rightarrow 2^\mathbf{Y} \) with training data \( D = D_l \cup D_u \) by taking both graph structure \( \v{A} \) and node feature \( \v{X} \) into consideration.
Moreover, we denote the nodes with label $y_j\in\rmY$ ($1 \leq j \leq k$), as a node set $Y_j = \{ v_1, v_2, \ldots, v_{m_j}\}$, where $m_j$ is the number of nodes belong to label $y_j$.

\subsection{Decomposition of MLNC Pipeline} \label{sec:pipe}
Since MLNC is considered as correlated multiple binary classification problems on graphs, each determining the presence or absence of a certain label~\citep{zhang2013review}.
If a multi-label task is modeled as unrelated single-label tasks, it not only wastes the information brought by the other labels but also squanders model parameters. 
Therefore, almost all the multi-label methods adopt a common style which utilize a graph encoder $\Phi_{\theta}$ as the backbone to explore both the individual information of multiple labels and their correlation information. 
The graph encoder $\Phi_{\theta}$ takes in the graph structure $\v{A}$ and the node feature $\v{X}$ and generate node embeddings $\v{Z} \in \mathbb{R}^{n \times d}$ as
\begin{equation} \label{eq:gnn}
    \v{Z} = \Phi_{\theta}(\v{A}, \v{X}),
\end{equation}
where backbone $\Phi_{\theta}$ parameterized by $\theta$ can be any existing GNNs, such as GCN~\citep{kipf2017semisupervised}, GAT~\citep{2017Graph}, GraphSAGE~\citep{hamilton2017inductive}.

Thereafter, the node embeddings $\v{Z}$ are further mapped into different label spaces by multiple shallow classifiers $\Psi = \{ \psi_1, \psi_2, \ldots, \psi_k \}$ with fewer parameters. 
For each label $y_j$, the objective is to train a classifier $\psi_j$ that computes the prediction $\tilde{\vy}_j\in\mathbb{R}^{n}$ as 
\begin{equation} \label{eq:clasifier}
    \tilde{\vy}_j =  \psi_j(\v{Z}).
\end{equation}
By considering both Eqs.~\ref{eq:gnn} and \ref{eq:clasifier} simultaneously, we observe that  Eq.~\ref{eq:gnn} is designed to generate node representations that are \emph{shared} across all labels (\ie, all classifiers) for subsequent tasks such as label prediction. In contrast, Eq.~\ref{eq:clasifier} is specific to the label $y_j$ and is utilized \emph{privately} by the corresponding classifier $\psi_j$ for label prediction. Consequently, the architecture of the entire framework can be delineated into two main components: the \emph{shared component} and the \emph{private component}, with $\v{Z}$ serving as the central bridge of the two components.

\eat{
Thereafter, the node embeddings $\v{Z}$ are further mapped into different label spaces by multiple shallow classifiers $\Psi = \{ \psi_1, \psi_2, \ldots, \psi_k \}$ with fewer parameters.
Here, for simplicity, we use a simple linear transformation as $\psi_j$ with parameter $\v{W}_j\in\mathbb{R}^{}$ with $i \in \{1, 2, \ldots, k\}$. 
For each label $y_j$, the goal is to learn a $\v{W}_j$ that maximizes the satisfaction of
\begin{equation}
    {y_j} =  \v{W}_j\v{Z}.
\end{equation}


Assuming that $\v{W}_j$ is a square matrix and invertible, then the 
$\v{W}_j$ we aim to learn should satisfies the following equation:
\begin{equation} \label{eq:l&r}
    \v{Z} = {\v{W}_j}^{-1}y_j.
\end{equation}
In particular, Eq.~\ref{eq:l&r} can be explained as: for each label $i$, the left part (\ie, $\v{Z}$) is the prediction, while the right part (\ie, ${\v{W}_j}^{-1}y_j$) is the ``ground-truth'' of label $i$. 
In other words, we hope that the node embeddings \(\v{Z}\) is close to the $k$ ``ground truth'' of \(\v{Z}\), denoted as \(\v{Z}_{GT}^i={\v{W}_j}^{-1}y_j\) for all \(i \in \{1,2,...,k\}\).


By combining Eqs.~\ref{eq:gnn} and \ref{eq:l&r}, the goal of MLNC is to maximizes the satisfaction of 
\begin{equation} \label{eq:goal}
    \Phi_{\theta}(\v{A}, \v{X}) = {\v{W}_j}^{-1}y_j, \quad i \in \{1,2,...,k\}.
\end{equation}
Consequently, the architecture of the entire framework is delineated into two main sectors: the \emph{shared component} and the \emph{private component}, with $\v{Z}$ serving as the central point of convergence:
}
\begin{definition}[Shared component in MLNC] \label{def:shared}
The shared component in MLNC boils down to the GNN backbone, parameterized by $\theta$, which can be denoted as $\Phi_{\theta}(\v{A}, \v{X})$. This component is to encode nodes collectively to generate embeddings $\v{Z}$, irrespective of their associated labels. 
\end{definition}
\begin{definition}[Private components in MLNC] \label{def:indHead}
The private components in MLNC boils down to the classification head for each label $y_j$, denoted as $\psi_j$, corresponding to the operations ${\v{W}_j}^{-1}\v{Y}_j$. This component is tasked with calibrating the node embeddings $\v{Z}$ for label-specific prediction.
\end{definition}

\section{Label Influence Propagation}  \label{sec:method}

\begin{figure*}[!t]
\centering
\includegraphics[width=0.9\linewidth]{figs/LIP_iclr_1.pdf}
\setlength{\belowcaptionskip}{-0.35cm}
\caption{
The overall framework of \model: (a) quantify the influence correlations in \textbf{P} operation (Sec.~\ref{sec:forw}) where the arrows with colors represents the mutual influence between labels; (b) quantify the influence correlations in \textbf{T} operation (Sec.~\ref{sec:back}) where $\mathcal{L}_{\text{bi}}$ separately calculates the loss of each label; (c) combine these two influence correlations to get the high-order label influence, and propagate through the constructed graph $G_{LIP}$ to calculate the important score $R$ for each label (Sec.~\ref{sec:InfProp}).
}
\label{figure:framework}
\end{figure*}

\subsection{Overview}~\label{sec:over}
The core idea of \model is to capture the influence between labels to guide the training process of model that need to be encouraged or suppressed within the overall set of labels.
To achieve this, we first need to quantify the relationships of mutual influence between labels. Then, based on this influence relationship, we further capture high-order relationship between labels and calculate the importance score of loss corresponding to each label during training. Finally, we combine the losses with their respective importance levels to improve the MLNC tasks.

\vpara{The analysis from message passing.} 
While the graph structure makes analyzing label influence relationships highly challenging, these influences arise through the shared GNN backbone decomposed from Sec.~\ref{sec:pipe}. 
GNN relies on message passing as its core mechanism, enabling solutions to a wide range of graph applications~\citep{wu2020comprehensive}. 
Therefore, we begin by breaking down the message passing into two \emph{independent} operations: propagation (\textbf{P}) and transformation (\textbf{T})~\citep{zhang2022model}. 
In short, the \textbf{P} step is a form of Laplacian smoothing that aggregates the information of neighbors, while the \textbf{T} step applies non-linear transformations to capture the data distribution of the training samples. 
On the one hand, in Sec.~\ref{sec:forw}, theoretical analysis shows that the influence values in \textbf{P} operation are independent of the node embeddings, and therefore unrelated to the parameters of message passing. 
On the other hand, Sec.~\ref{sec:back} analyzes the influence between labels in \textbf{T} operation which does not affect the topological structure between node sets with different labels, and is thus unrelated to the aggregation in message passing.
Since the two operations are independent, the quantified influence can be further combined as the foundation of computing high-order influence correlations and help to improve MLNC task (Sec.~\ref{sec:InfProp}). 

The overall framework is shown in Fig.~\ref{figure:framework} with three main parts: (a) calculating the influence between labels in \textbf{P} operation, (b) calculating the influence between labels in \textbf{T} operation, and (c) 
propagating the influence on the label information propagation graph to form the final learning objective.


\subsection{Influences in Propagation Operation} \label{sec:forw}
This \emph{propagation operation} involves aggregating information from their contextual nodes, which may belong to different labels. Consequently, the intertwined distribution of nodes from various labels facilitates their interactions.
During propagation, the interaction we seek to analyze is between two label sets $Y_a$ and $Y_b$ ($1 \leq a, b \leq k$), which is shown in Fig.~\ref{figure:framework}(a).
Specifically, 
we decompose the computation of influence correlations among labels during the P step into two parts: the magnitude of influence between all pairs of nodes ("Influence between node pairs") and the positive or negative direction of influence between node pairs with different labels ("Influence between label sets").


\stitle{Influence between node pairs.}
We start the analysis by investigating the influence between node pairs $v_i$ and $v_j$. 
Inspired by previous work~\citep{xu2018representation,wang2020unifying,zhang2021node}, the influence from $v_i$ to $v_j$ can be quantified by measuring how alterations in the input feature $\v{x}_i$ of $v_i$ affect the node embedding $\v{z}_j$ of $v_j$ after $l$ iterations of message passing. 
In particular, for $v_i$ and $v_j$, supposing the message passing of the shared backbone $\Phi_{\theta}$ as $\v{Z}^{(k)} = \hat{\v{A}}^k \v{Z}^{(0)} \theta $ where $\v{Z}^{(0)} = \v{X}$ and $\hat{\v{A}}= \mathbf{{\tilde{D}}}^{-1/2} \mathbf{{\tilde{A}}} \mathbf{{\tilde{D}}}^{-1/2}$ is the symmetrically normalized adjacency matrix with self-loops, the influence of $v_i$ on the final embedding of $v_j$, \ie, $I(i,j)$, can be defined as
\begin{equation} \label{eq:inf}
    \textstyle I(i,j) =  \frac{\partial \v{z}^{(k)}_{j}}{\partial \v{z}^{(0)}_{i}}.
\end{equation}
Although the calculation of $I(i,j)$ initially involves node features, inspired by \citep{xu2018representation}, we prove that the magnitude of $I(i,j)$ is actually independent of the features themselves and is instead proportional in expectation to a random walk distribution $\mathbf{\pi}_{\text{lim}}^{(ji)}$, namely $I(i, j) \propto \mathbf{\pi}_{\text{lim}}^{(ji)}$. 
To further quantify and compute this distribution $\mathbf{\pi}_{\text{lim}}$, inspired by \citep{gasteiger2018predict}, we find that we can solving the Personalized PageRank (PPR) $\mathbf{\pi}_{\text{ppr}}$ instead. 
By iteratively computing the PPR, we can obtain the influence correlations between any two nodes.
Thus, we derive the quantification of the influence between any pair of nodes $v_i, v_j$ during the P phase as: 

\begin{equation} \label{eq:InfVal}
    \textsc{Inf}^{P}(v_i, v_j) := I(i, j) \propto \mathbf{\pi}_{\text{ppr}}^{(ji)} = \{ \alpha (\mathbf{I}_n - (1 - \alpha)\hat{\mathbf{A}})^{-1} \mathbf{s}_{v_i} \}_{v_j},
\end{equation}
where $\v{I}_n\in\mathbb{R}^{n\times n}$ is the identity matrix, $\alpha \in (0,1]$ is the teleport (or restart) probability, $\v{s}_{v_i}$ is a one-hot indicator vector, and  $\{ \cdot \}_{v_j}$ means the 
$v_j$-th element of $\{ \cdot \}$. 
Detailed discussion is in App.~\ref{app:theory}.

\begin{wrapfigure}{r}{0.41\textwidth}
    \centering
    \includegraphics[width=.38\textwidth]{figs/For.pdf}
    \caption{Illustration of the direction of positive and negative influence during P step. The nodes with both colors indicates the ones with both labels.}
    \label{fig:forward}
\end{wrapfigure}
\stitle{Influence between label sets.}
Having computed the influence correlations magnitude $\textsc{Inf}^{P}(v_i, v_j)$ between any pair of nodes during the P phase, represented as an $n\times n$ node influence correlations matrix, the second part involves integrating the influence from all nodes in $Y_a$ (with label $y_a$) on all nodes in $Y_b$ (with label $y_b$) to get the label influence correlation between $y_a$ and  $y_b$. This part yields a $k \times k$ label influence relationship matrix $\textsc{Inf}^{P}(y_a,y_b)$. 
A straightforward way would be to directly sum up the node level influence. However, as shown in our analysis in Fig.~\ref{fig:intro}, there are both positive and negative influences between labels. 

Therefore, we proceed to analyze the combinations of two node sets $Y_a$ and $Y_b$ with different labels to determine which node pairs exert positive influence and which exert negative influence.
Specifically, if a node $v_i$ is labeled with $y_j$, it can be seen that this node is labeled as 1 in the binary classification of whether it has the label $y_j$; conversely, if it does not have the label $y_j$, it can be seen as being labeled as 0.
Based on the Laplacian smoothing in the P operation~\citep{zhang2022model}, the information transmitted by source node $v_i$ labeled with 1 makes it easier for target nodes $v_j$ to be identified as labeled 1 in the same label space.
Note that we break down the discussion of the influence direction between $Y_a$ and $Y_b$ by first analyzing the influence from $Y_a$ to $Y_b$. 
The reverse direction can be inferred by analogy.
Thus, we first analyze the influence from $Y_a$ to $Y_b$, where the target is the set of all nodes with label $y_b$. 
From analysis in Fig.~\ref{fig:forward}, we discuss the source from the following two aspects: 

\begin{itemize}[leftmargin=*]
\item In the label set $Y_a$, the nodes are labeled as $[1,0]$ (possessing $y_a$), it therefore has negative influence on label set $Y_b$ ($[0,1]$). However, the overlapping nodes are labeled as $[1,1]$ where the positive and negative influences to target nodes are neutralized. 
\begin{definition}[Source of Negative Influence $\textsc{Inf}^{P_-}$] \label{def:NegInf}
\begin{equation}
      V_{neg} = Y_a \setminus (Y_a \cap Y_b) \rightarrow Y_b ,
\end{equation}
where $\setminus$ means set difference and $\cap$ means intersection.
\end{definition}
\item Intuitively, nodes with the label $y_b$, noted as $[*, 1]$, may contribute positive influence to all nodes in $Y_b$ (those with the label $[0, 1]$). However, nodes that simultaneously have both $y_a$ and $y_b$ labels (noted as $[1, 1]$) neutralize their positive and negative influences on $Y_b$. Therefore, the source of positive influence from $Y_a$ to $Y_b$ can be defined as follows:
\begin{definition}[Source of Positive Influence $\textsc{Inf}^{P_+}$] \label{def:PosInf}
\begin{equation}
   V_{pos} = \overline{Y_a} \setminus \overline{Y_a \cup Y_b} = \overline{Y_a} \cap Y_b \rightarrow Y_b,
\end{equation}
where $\overline{\cdot}$ means complement and $\cup$ means union.
\end{definition}
\end{itemize}
Thus, by simply substituting $\textsc{Inf}^{P_+}$ as the index into Eq.~\ref{eq:InfVal} and normalizing by number of nodes, we can obtain the value of positive influence from $Y_a$ to $Y_b$; similarly, we can obtain the value of negative influence by substituting $\textsc{Inf}^{P_-}$.
Adding the positive and negative influence together yields the complete influence in propagation operation $\textsc{Inf}^{P}$ between any two labels.






\subsection{Influences in Transformation Operation} \label{sec:back}
In this section, we analyze interactions between label sets in transformation operation from message passing.
The shared GNN backbone learns from the training data through model parameters in this transformation operation.
Moreover, the direction and magnitudes of parameter updates in the shared backbone $\Phi_\theta$ is determined by each label $y_i$, but different labels may require different directions and magnitudes of updates to reach their own optimal solutions.
Thus, labels influence each other through parameter transformation during back propagation, which is through gradient descent. 
Next, we further quantify and analyze the gradient influence exerted by different labels.

Note that we use binary cross-entropy losses as $\mathcal{L}_{\text{bi}}$ for each label individually, which can be changed to other types of binary classification loss:
\begin{equation} \label{eq:lossBi}
    \mathcal{L}_{\text{bi}}(\v{y}, \hat{\v{y}}) = -\frac{1}{M} \sum_{i=1}^{M} \left[ \v{y}_i \log(\hat{\v{y}}_i) + (1 - \v{y}_i) \log(1 - \hat{\v{y}}_i) \right],
\end{equation}
where $\v{y} \in \{0, 1\}$ is a binary label indicating whether a node possess a certain label and $M$ is the number of the training nodes.

Therefore, the binary classification loss $\mathcal{L}_{\text{bi}}$ of each label exerts specific gradient $\nabla_{bi}$ to the shared component during the transformation operation:
\begin{equation} \label{eq:gradient}
    \nabla_{bi} = \frac{\partial \mathcal{L}_{\text{bi}}}{\partial \Phi_{\theta}}.
\end{equation}
These different gradients generated by the losses of different labels may either be mutually beneficial or mutually harmful. Therefore, during this transformation operation, any pair of labels will influence each other through their respective gradients as shown in Fig.~\ref{figure:framework}(b).

Inspired by the definition of gradients conflicting \citep{yu2020gradient}, we propose that as the angle between the gradients of two losses increases, the positive influence between the labels becomes less significant. Conversely, the smaller the angle between the gradients generated by different labels, the smaller the negative influence they have on each other.
Accordingly, here we give a formal definition on measuring the influence in transformation \textbf{T} step.
\begin{definition}[\textbf{T} step Influence $\textsc{Inf}^{T}$]\label{def:backinf}
Given two losses $L_a$ and $L_b$ ($1 \leq a, b \leq k$) from label $y_a$ and $y_b$ respectively, the influence in transformation operation is the accumulated differences between the gradients $\nabla_a$ and $\nabla_b$:
\begin{equation}
  \textstyle  \textsc{Inf}^{T}(a,b) = \textsc{Inf}^{T}(b,a) = - \sum_{\theta} \textsc{Angle} ( \nabla_a(\theta),\nabla_b(\theta) ),
\end{equation}
where $\nabla_a(\theta)$ is the gradient of binary loss $\mathcal{L}_a$ on shared backbone $\Phi_{\theta}$, and $\textsc{Angle}(\cdot)$ can measure the angle between directions of two gradients. In our implementation, we determine the angle between the two gradients by calculating the cosine similarity.
\end{definition}

\subsection{Influence Propagation on Label Graph} \label{sec:InfProp}
\vpara{Label graph construction.} From the previous work~\citep{zhang2022model,wu2019simplifying,zhang2022graph} and the above analysis, it can be seen that the influence between labels during propagation and transformation operation is independent of each other. Therefore, the influences from these two parts can be combined through multiplication to obtain the final relationship matrix of label influence:
\begin{equation}
    \textsc{InfMat} = \textsc{Inf}^{P} * \textsc{Inf}^{T}, \quad \textsc{InfMat} \in \mathbb{R}^{k \times k},
\end{equation}
where $*$ means element-wise multiply.
Each value in $\textsc{InfMat}$ reflects the relationship between each pair of labels. 
However, we still need to analyze the higher-order correlations between labels. 

Thus, we propose to build a \textit{label graph} $G_{LIP}$. 
Note that $\textsc{InfMat}$ contains both positive and negative values. Our goal is to enhance the positive influence and suppress the negative influence between labels.
Thus, we introduce a simple method yet can preserve higher-order influence to convert the $\textsc{InfMat}$ into label graph $G_{LIP}$:
\begin{definition} \label{def:G_lip}
Given the label influence matrix $\textsc{InfMat} \in \mathbb{R}^{k \times k}$, we define label graph as
\begin{equation}
    G_{LIP}= (\v{V}_y, \v{A}_{LIP}), \quad \text{with} \ \v{A}_{LIP} = \textsc{SoftMax}_\text{row}(\textsc{InfMat}),
\end{equation}
where $\textsc{SoftMax}_\text{row}(\cdot)$ represents calculating softmax along the rows which means that we tend to treat the negative influence as weak positive influence in the higher-order view.
$\v{V}_y$ denotes the label nodes (triangle nodes in Fig.~\ref{figure:framework}(c)) which represent a certain label type. $\v{A}_{LIP}$ is the weighted adjacent matrix denoting the overall higher-order correlations between all the labels.
\end{definition}

\vpara{Overall loss function.} \label{sec:learn}
Intuitively, if a label is detrimental to many other labels, we suppressed it by reducing its coefficient of the loss corresponding to this label to lessen its negative impact. Conversely, if a label is beneficial to many other labels, we encourage it by increasing the coefficient of the loss to enhance its positive impact.
Thus, we propose to quantify the importance coefficient (score) based on the label graph $G_{LIP}$.
Since the higher-order influence propagation take place on the graph $G_{LIP}$, we turn to the computation of PageRank~\citep{Page1999ThePC} for $G_{LIP}$. 

Formally, we calculate the importance score $R \in [0,1]^k$ of each label node as 
\begin{equation}
    R = \left( \v{I}_{h} - \beta \v{A}_{LIP} \right)^{-1} \left( \frac{1 - \beta}{h} \right) \mathbf{1},
\end{equation}
where $\v{I}_{h}$ is the identity matrix, $\beta \in (0,1)$ is the teleport (or restart) probability here, $\mathbf{1}$ is an $h$-dimensional vector in which all elements are equal to 1. 



Incorporating this importance coefficient $R$, we present the overall loss function as
\begin{equation}
    \textstyle\mathcal{L}_{\model} = \sum_{j=0}^{k} R_j \mathcal{L}_{\text{bi}}(\hat{\v{y}_j}, \v{y}_j),
\end{equation}
where $\mathcal{L}_{\text{bi}}$ is from Eq.~\ref{eq:lossBi} which can be changed to other imbalanced classification loss \citep{chen2024consistency,zhuo2024partitioning}.

\section{Experiments}  
In this section, we evaluate \model and aim to answer the following research questions:
\begin{itemize}[leftmargin=*]
\item \textbf{RQ1.} How does \model perform on MLNC task with different settings and label ratios?
\item \textbf{RQ2.} Can \model serve as a plug-and-play booster to be equipped with any backbone?
\item \textbf{RQ3.} How reliable are the label influence relationships we proposed to capture?
\end{itemize}


\subsection{Experimental Settings}
To comprehensively validate our framework, we conduct experiments on 2 classical MLNC datasets (DBLP~\citep{8970680}, BlogCat~\citep{8894391}), 1 large scale OGB dataset (Ogbn-proteins~\citep{hu2020open}, OGB-p in short), and 3 new biological datasets (PCG, HumLoc, EukLoc~\citep{zhao2023multi}) from different domains. The statistics of datasets are in App.~\ref{app:datasets}.
We use three types of baselines for comparison.
First, we combine unsupervised embedding and multi-label classifier. We adopt the frequently used unsupervised graph embedding methods Node2Vec~\citep{grover2016node2vec}, the classic multi-label classifiers binary relevance (BR)~\citep{Tsoumakas2010MiningMD} and classifier chain (CC)~\citep{10.1007/978-3-642-04174-7_17}.
Second, we combine semi-supervised backbone with SOTA multi-label methods.
We adopt the same backbone for baselines and our own for fairness. 
We choose GCN~\citep{kipf2017semisupervised} as main backbone, ML-KNN~\citep{10.1016/j.patcog.2006.12.019} and latest work PLAIN~\citep{wang2023deep} as multi-label methods.
The third group is backbone-relevant methods which specifically designed for MLNC on graphs. 
We choose MLGW~\citep{8970680}, ML-GCN~\citep{gao2019semisupervised}, LARN~\citep{XIAO2022235}, LANC~\citep{ZHOU2021115063} and the latest work VariMul~\citep{10.1145/3459637.3482391}.
Furthermore, 
``backbone+Auto'' means the coefficients of losses of different labels are learned automatically.
More details are in App.~\ref{app:settings}.


\begin{table*}[!ht]
    \centering
    \caption{Performance (mean ± std. deviation) under node split at 6:2:2 in terms of Macro AUC, AP.}
    \resizebox{0.92\textwidth}{!}{
    \begin{tabular}{c|cc|cc|cc} \toprule
        & \multicolumn{2}{c}{DBLP}        & \multicolumn{2}{c}{BlogCat}        & \multicolumn{2}{c}{OGB-p}          \\
        Metrics & \multicolumn{1}{c}{Macro AUC} & AP & \multicolumn{1}{c}{Macro AUC} & AP & \multicolumn{1}{c}{Macro AUC} & AP  \\ \midrule
        Node2vec+BR & 71.22 $\pm$ 0.31 & 57.41 $\pm$ 1.29 & 53.76 $\pm$ 1.08 & 6.56 $\pm$ 0.75 & 50.22 $\pm$ 3.01 & 1.92 $\pm$ 0.73 \\ 
        Node2vec+CC & 72.57 $\pm$ 0.24 & 57.13 $\pm$ 1.88 & 57.97 $\pm$ 1.31 & 8.93 $\pm$ 0.82 & 51.39 $\pm$ 2.14 & 2.07 $\pm$ 0.31 \\ \midrule
        GCN+ML-KNN & 90.11 $\pm$ 1.02 & 80.01 $\pm$ 0.77 & 60.18 $\pm$ 2.33 & 9.16 $\pm$ 0.60 & 55.07 $\pm$ 2.73 & 2.02 $\pm$ 0.33 \\ 
        GCN+PLAIN & 80.55 $\pm$ 1.23 & 73.44 $\pm$ 1.12 & 63.95 $\pm$ 1.85 & 10.21 $\pm$ 0.56 & 66.29 $\pm$ 3.45 & 3.26 $\pm$ 0.67 \\ \midrule
        MLGW & 73.32 $\pm$ 1.44 & 56.03 $\pm$ 0.47 & 60.02 $\pm$ 2.19 & 9.81 $\pm$ 0.46 & 50.70 $\pm$ 2.54 & 1.78 $\pm$ 0.73 \\ 
        ML-GCN & 72.66 $\pm$ 2.73 & 56.71 $\pm$ 2.59 & 60.97 $\pm$ 2.21 & 9.68 $\pm$ 0.39 & 60.11 $\pm$ 2.65 & 2.19 $\pm$ 0.54 \\ 
        LARN & 74.29 $\pm$ 2.53 & 58.11 $\pm$ 1.25 & 63.18 $\pm$ 1.84 & 9.77 $\pm$ 0.63 & 68.18 $\pm$ 1.33 & 3.08 $\pm$ 0.16 \\ 
        LANC & 91.68 $\pm$ 0.42 & 83.50 $\pm$ 0.83 & 67.94 $\pm$ 3.30 & 10.35 $\pm$ 0.82 & 68.75 $\pm$ 0.51 & {\ul 4.11 $\pm$ 0.23} \\ 
        VariMul & {\ul 92.14 $\pm$ 1.23} & 85.30 $\pm$ 0.92 & {\ul 68.71 $\pm$ 2.97} & \textbf{13.74 $\pm$ 0.83} & 70.73 $\pm$ 1.67 & 2.31 $\pm$ 0.44 \\ \midrule
        GCN+Auto & 92.13 $\pm$ 1.57 & {\ul 85.48 $\pm$ 1.32} & 66.05 $\pm$ 1.25 & 12.53 $\pm$ 0.51 & {\ul 71.39 $\pm$ 1.34 }& 2.11 $\pm$ 0.22 \\ 
        GCN+\model & \textbf{94.38 $\pm$ 1.51} & \textbf{87.45 $\pm$ 1.28} & \textbf{70.21 $\pm$ 2.02} & {\ul 12.33 $\pm$ 0.91} & \textbf{74.82 $\pm$ 0.34} & \textbf{5.72 $\pm$ 0.28} \\  \bottomrule
        & \multicolumn{2}{c}{PCG}        & \multicolumn{2}{c}{HumLoc}        & \multicolumn{2}{c}{EukLoc}          \\
        Metrics & \multicolumn{1}{c}{Macro AUC} & AP & \multicolumn{1}{c}{Macro AUC} & AP & \multicolumn{1}{c}{Macro AUC} & AP  \\ \midrule
        Node2vec+BR & 52.66 $\pm$ 1.73 & 15.13 $\pm$ 0.25 & 54.17 $\pm$ 1.58 & 10.67 $\pm$ 0.42 & 51.29 $\pm$ 1.60 & 6.00 $\pm$ 0.72 \\ 
        Node2vec+CC & 52.92 $\pm$ 0.62 & 14.54 $\pm$ 0.51 & 54.81 $\pm$ 1.29 & 11.53 $\pm$ 0.36 & 52.18 $\pm$ 1.16 & 6.55 $\pm$ 0.59 \\ \midrule
        GCN+ML-KNN & 56.11 $\pm$ 0.50 & 18.60 $\pm$ 0.82 & 57.99 $\pm$ 0.35 & 11.76 $\pm$ 0.77 & 63.95 $\pm$ 1.15 & 9.34 $\pm$ 0.77 \\ 
        GCN+PLAIN & 59.75 $\pm$ 1.70 & 20.16 $\pm$ 1.00 & 60.15 $\pm$ 1.28 & 14.71 $\pm$ 1.51 & 68.02 $\pm$ 1.10 & 11.72 $\pm$ 0.69 \\ \midrule
        MLGW & 55.86 $\pm$ 3.92 & 15.59 $\pm$ 1.33 & 56.92 $\pm$ 1.03 & 10.54 $\pm$ 0.43 & 51.77 $\pm$ 1.41 & 6.02 $\pm$ 0.55 \\ 
        ML-GCN & 57.24 $\pm$ 2.44 & 19.45 $\pm$ 1.13 & 60.79 $\pm$ 1.38 & 14.55 $\pm$ 0.46 & 54.17 $\pm$ 1.88 & 6.92 $\pm$ 0.49 \\ 
        LARN & 57.79 $\pm$ 0.61 & 19.09 $\pm$ 0.38 & 61.48 $\pm$ 1.22 & 17.89 $\pm$ 0.62 & 57.79 $\pm$ 1.28 & 7.41 $\pm$ 0.80 \\ 
        LANC & 56.58 $\pm$ 0.63 & 19.51 $\pm$ 0.93 & 59.63 $\pm$ 1.21 & 13.26 $\pm$ 0.98 & 51.25 $\pm$ 0.63  & 6.11 $\pm$ 0.89 \\ 
        VariMul & {\ul 62.77 $\pm$ 0.34} & {\ul 21.93 $\pm$ 0.52} & {\ul 67.42 $\pm$ 2.44 } & {\ul 23.34 $\pm$ 0.88}  & {\ul 71.44 $\pm$ 1.66 } & {\ul 13.74 $\pm$ 0.76}\\ \midrule
        GCN+Auto & 58.53 $\pm$ 0.47 & 19.11 $\pm$ 0.83 & 66.07 $\pm$ 1.17 & 21.58 $\pm$ 1.09  & 70.25 $\pm$ 1.25 & 13.05 $\pm$ 0.46 \\ 
        GCN+\model & \textbf{65.73 $\pm$ 0.52} & \textbf{22.97 $\pm$ 1.69} & \textbf{73.22 $\pm$ 1.76} & \textbf{25.18 $\pm$ 1.16} & \textbf{72.92 $\pm$ 1.82} & \textbf{15.26 $\pm$ 0.56} \\ \bottomrule
    \end{tabular}
    }
    \label{tab:main}
\end{table*}

\subsection{Comparison with Baselines}
To answer \textbf{RQ1}, we evaluate \model against various kinds of baselines on classic and latest benchmark datasets, ranging from multi-label techniques used on any data form to methods tailored for MLNC on graphs.
To fully evaluate the effectiveness against baselines, we adopt 2 split settings: node split and label split. 
Refer to App.~\ref{app:settings} to see the details and differences between them.
We also evaluate under different training ratio (App.~\ref{app:moreExp}).

From Tab.~\ref{tab:main}, we can draw several conclusions. 
First and foremost, \model is the most effective one most of the time, which verifies the effectiveness of our approach. 
Our method outperforms other baselines by 3.06\% on AUC and 2.54\% on AUC on average.
Note that our methods can boost the performance based on the backbones. 
Second, we find that while other methods including VariMul, LARN, and PLAIN achieve good results, methods tailored for MLNC on graph data always outperform those designed for all data form. 
The reason is that integrating multi-label processing into the encoding of graph structure and node features makes the node embeddings more informative, while methods that separate node encoding from multi-label correlation mining struggle to achieve optimal results. 
Third, the methods using unsupervised node embedding perform even worse since there is no label information when representing the nodes. 
Although methods like ``backbone+Auto'', which automatically learn loss coefficients, appear more flexible, setting these coefficients solely based on the loss is like searching for a needle in a haystack, making it difficult to amplify the loss of beneficial labels during the learning process.
On one hand, our method better explores the relevant impact of the nodes contained by labels from the perspectives of both message passing and gradient update; on the other hand, during the training process, our method is also constantly related to the current state of the model. This enables our approach to dynamically mine the relationships of nodes with different labels in the process of forward and backward information transmission, thereby making the model more effective.



\subsection{Comparison with Different Backbones}
One of the advantages of \model is the ability of being plugged into any node embedding methods. 
Thus, to answer \textbf{RQ2}, we propose to change the backbone to others with different inductive bias. Here we compare our method with the ``Com'', which directly sums the multi-label loss to train the model. We also change backbones on more datasets in App.~\ref{app:moreExp}. Since some of the datasets are heterophily graphs (see Tab.~\ref{tab:stat}), we use backbones designed for homophily and heterophily graphs.

\vpara{Results.} 
We change backbones on BlogCat dataset in Fig.~\ref{fig:analysis}(a).
We can see that \model constantly achieves large improvement when applied to any backbone, demonstrating the effectiveness of our approach. 
Although BlogCat is a heterophily graph, H2GCN does not perform well comparing to others. Similar results can be found in~\citep{zhao2023multi}. However, H2GCN perform best on Abnormal as a heterophily graph in App.~\ref{app:moreExp}.
We believe there are two reasons for this. First, BlogCat has no node features, so we used Node2Vec as feature for all methods, making the input of backbone actually a homogeneous graph from a feature perspective. However, H2GCN’s strategy of aggregating more neighbor features makes it harder for nodes to learn the labels. 
Second, the heterophily here is calculated from multi-label, but H2GCN is designed for single-label, so it may not perform better.
An interesting observation is that different backbone models exhibit varying degrees of sensitivity to our method. For instance, GCN show greater improvements before and after the application of our method compared to GAT. We think that this may be due to our method's relation to gradient calculation, with different models having varying sensitivities to labels. 

\begin{figure*}
    \centering
    \includegraphics[width=1.0\linewidth]{figs/Exp_all_new.pdf}
    \setlength{\abovecaptionskip}{-0.05cm}
    \setlength{\belowcaptionskip}{-0.45cm}
    \caption{Model Analysis showing the effectiveness of \model.}
    \label{fig:analysis}
\end{figure*}

\subsection{Discussion on Label Correlations}
To answer the \textbf{RQ3}, we evaluate the label influence correlation mined by \model. 
First, we decompose the proposed two parts of influence correlations to evaluate effectiveness of each correlation separately. Then, we compare our method with other commonly used approaches for explicitly computing correlations to see which more accurately reflects the influence correlations between labels.

\vpara{Ablation study on correlations.} 
The ``Only P'', ``Only T'' and ``All'' means using $\textsc{Inf}^{P}$, $\textsc{Inf}^{T}$ and $\textsc{InfMat}$ respectively, where GCN is the backbone.
As shown in Fig.~\ref{fig:analysis}(b), the performance using the whole $\textsc{InfMat}$ is the best, which prove the utility of the combination of the influence during both forward and backward propagation. 
Moreover, we find that $\textsc{Inf}^{P}$ perform better than $\textsc{Inf}^{T}$ on EukLoc while the other way around on DBLP.
We believe the reason is that EukLoc is more heterophilic, making it more dependent on global graph structure. Our $\textsc{Inf}^{P}$ focuses on uncovering influence during the propagation process, which emphasizes the structural aspects more during training compared to $\textsc{Inf}^{T}$.
More ablation study on normalization is in App.~\ref{app:moreExp}.


\vpara{Compare with other label correlations.} 
Here, we can verify if our captured label correlation align with expectations.
Specifically, we aim to capture the pairwise label correlations similar as shown in Fig.~\ref{fig:intro}, which contain positive and negative correlations. 
Note that we cannot use this performance differences as the label correlations in the model, as retraining the performance to calculate them are not possible.
Therefore, we compare which method can obtain the relationship matrix that most closely matches the influence relationships inferred from the results.
Here we collect all methods used for explicitly calculating multi-label correlations: co-occurrence from~\citep{wang2023deep} and cosine similarity of label embeddings from~\citep{xu2018semi}.
We calculate the \textit{l2}-norm difference between the three methods we compare and the influence correlations as epochs increase (with GCN backbone on DBLP).
As shown in Fig.~\ref{fig:analysis}(c), although $\textsc{InfMat}$ changes over epochs, the difference between $\textsc{InfMat}$ and ground truth correlation is the closest. 
Moreover, it can be observed that the label co-occurrence measurement is closer compared to the cosine similarity of labels.
Additionally, for a significant portion of epochs, the distance relative to ground truth does not change much, not because the influence we measure is static, but because the influence between different pairs of labels waxes and wanes, yet always remains close to the ground truth. 
Due to space limit, we place the results of other experiments in the App.~\ref{app:allother}.


\section{Conclusions}  
In this paper, we address the challenges of the valuable but overlooked MLNC task by introducing a novel approach to understanding and leveraging label influence correlations on graphs. 
By decomposing the influences between labels in message passing mechanism, we provide insights on analyzing and quantifying label influences during the learning process. 
Our proposed method, \model, boosts the MLNC by ensuring that positive label influences are amplified while mitigating the negative ones. 
Empirical results across comprehensive MLNC datasets demonstrate the effectiveness of \model, achieving consistent improvements in various metrics. 


\newpage

\subsubsection*{Acknowledgments}
This research is supported by the National Research Foundation, Singapore and Infocomm Media Development Authority under its Trust Tech Funding Initiative, and the National Research Foundation, Singapore under its AI Singapore Programme (AISG Award No: AISG2-TC-2021-002). Any opinions, findings and conclusions or recommendations expressed in this material are those of the author(s) and do not reflect the views of National Research Foundation, Singapore and Infocomm Media Development Authority. This work is also supported by NSFC (No. 62322606, No. 62441605), Zhejiang NSF (LR22F020005), and CCF-Zhipu Large Model Fund.

\bibliography{iclr2025_conference}
\bibliographystyle{iclr2025_conference}

\newpage

\appendix
\appendix
\section*{Appendix}

\section{Discussions on Influence from Propagation} \label{app:theory}
This section aims to supplement Sec~\ref{sec:forw} regarding the pair wise influence value from propagation (\textbf{P} operation). 
Specifically, inspired by previous work~\citep{xu2018representation,wang2020unifying}, we prove the correctness of the calculation method on influence value between nodes with different labels in propagation operation. 

\begin{theorem}[Influence Value in Propagation]
During the propagation phase in message passing, the influence value between two nodes with different labels is equivalent, in expectation, to the transition probability between them as calculated by Personalized PageRank (PPR)~\citep{gasteiger2018predict,chen2021topology}. Namely, the influence value between $v_i, v_j$ is
\begin{eqnarray}
    \textsc{Inf}^{P}(v_i, v_j) = \{ \alpha (I_n - (1 - \alpha)\hat{\v{A}})^{-1} s_{v_i} \}_{v_j} \propto I(i,j),
\end{eqnarray}
where $\v{I}_n\in\mathbb{R}^{n\times n}$ is the identity matrix, $\alpha \in (0,1]$ is the teleport (or restart) probability, $\v{s}_{v_i}$ is a one-hot indicator vector and  $\{ \cdot \}_{v_j}$ means the $v_j$-th element of $\{ \cdot \}$.
\end{theorem}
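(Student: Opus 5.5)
We follow the route of \citep{xu2018representation}, adapted to the linearized backbone of Sec.~\ref{sec:forw}, and then pass from the finite-depth random walk to its PPR limit as in \citep{gasteiger2018predict}.

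\textbf{Step 1: Jacobian of a single layer.} With $\v{Z}^{(l)}=\hat{\v{A}}\v{Z}^{(l-1)}\theta_l$, or with an elementwise nonlinearity $\sigma$ inserted, the chain rule gives
\begin{equation*}
\frac{\partial \v{z}^{(l)}_j}{\partial \v{z}^{(l-1)}_u} = \hat{\v{A}}_{ju}\,\mathrm{diag}\bigl(\sigma'(\cdot)\bigr)\theta_l^{\top}.
\end{equation*}

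\textbf{Step 2: Unroll over $k$ layers.} Unrolling Step 1 expresses $I(i,j)=\partial \v{z}^{(k)}_j/\partial \v{z}^{(0)}_i$ as a sum over all length-$k$ paths $j=u_k,u_{k-1},\dots,u_0=i$ in the graph with self-loops. Each path contributes $\prod_{l}\hat{\v{A}}_{u_l u_{l-1}}$ times a product of matrices $\mathrm{diag}(\sigma')\,\theta_l^\top$. Following \citep{xu2018representation}, we assume the activation patterns are Bernoulli with a common rate $\rho$, independent of the path. Taking expectations, the path-dependent matrix factor becomes the same matrix $\rho^k\prod_l\theta_l^\top$ for every path. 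Hence
\begin{equation*}
\mathbb{E}[I(i,j)] = c\,(\hat{\v{A}}^k)_{ji},
\end{equation*}
where $c$ does not depend on $i$, $j$, or $\v{X}$. In the purely linear model of Sec.~\ref{sec:forw} this identity holds exactly, without any expectation. Measuring influence by a scalar summary, such as the sum of Jacobian entries normalized over $i$, removes $c$, so $I(i,j)\propto(\hat{\v{A}}^k)_{ji}$. This is independent of features, as claimed.

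\textbf{Step 3: Interpret as a random walk.} Relate $(\hat{\v{A}}^k)_{ji}$ to a $k$-step random-walk distribution $\pi^{(ji)}$. For the row-normalized $\tilde{\v{D}}^{-1}\tilde{\v{A}}$ this is literal. For the symmetric $\hat{\v{A}}$, the two differ by the diagonal similarity $\tilde{\v{D}}^{1/2}(\cdot)\tilde{\v{D}}^{-1/2}$, i.e., by degree factors we absorb into the proportionality. As $k\to\infty$ the walk converges to a limit $\pi_{\text{lim}}$ that forgets the start node; this is the oversmoothing regime.

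\textbf{Step 4: Replace by PPR.} Following \citep{gasteiger2018predict}, we replace the plain walk by a walk with restart probability $\alpha$, which corresponds to the APPNP propagation. Its stationary distribution solves
\begin{equation*}
\pi=(1-\alpha)\hat{\v{A}}\pi+\alpha\v{s}_{v_i}.
\end{equation*}
Since $\|(1-\alpha)\hat{\v{A}}\|<1$ for $\alpha>0$, the Neumann series converges and
\begin{equation*}
\pi_{\text{ppr}}=\alpha\sum_{t\ge0}(1-\alpha)^t\hat{\v{A}}^t\v{s}_{v_i}=\alpha\bigl(\v{I}_n-(1-\alpha)\hat{\v{A}}\bigr)^{-1}\v{s}_{v_i}.
\end{equation*}
Each term $\hat{\v{A}}^t\v{s}_{v_i}$ is, by Step 2, proportional in expectation to the depth-$t$ influence. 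So the $v_j$ entry of $\pi_{\text{ppr}}$ is a geometric mixture of the influences over all depths, which gives the stated proportionality.

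\textbf{Main obstacle.} The difficult step is Step 2's expectation argument. The independence assumption on activations must be invoked explicitly, and the matrix-valued Jacobian must be reduced to a scalar in a way that yields a common constant $c$ for all pairs. A second weak point is that "$\propto$" in Step 4 holds only with respect to the PPR mixture of depths rather than a fixed depth $k$. We will state this as the intended meaning, namely PPR as the infinite-depth, restart-regularized surrogate of the limit distribution.
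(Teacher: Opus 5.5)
Your proposal follows essentially the same route as the paper's proof: expand the Jacobian over graph paths via the chain rule, assume a common Bernoulli activation rate so the expected weight factor is the same for every path, normalize it away to obtain a $k$-step random-walk (i.e.\ $\hat{\v{A}}^k$) quantity, and then substitute the restart walk whose fixed point is $\alpha(\v{I}_n-(1-\alpha)\hat{\v{A}})^{-1}\v{s}_{v_i}$. Your Neumann-series reading of PPR as a geometric mixture over depths, and your remark on symmetric versus row normalization, make explicit two points that the paper's proof simply asserts or glosses over, but they do not change the argument.
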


\begin{proof}
We use $k$ layers of GCN~\cite{kipf2017semisupervised} as backbone $\Phi_\theta$ for analysis. 
We first prove the the influence value between $v_i, v_j$ with different labels is equivalent, in expectation, to the $k$-step random walk distribution on graph $G$ starting at node $v_i$.

From the definition of influence from $v_i$ to $v_j$ in Eq.~\ref{eq:inf}, we proceed with the detailed calculations:
\begin{eqnarray}
\begin{aligned}
\textstyle I(v_i, v_j) =& \frac{\partial z_j^{(k)}}{\partial z_i^{(0)}} 
=& \frac{1}{\widetilde{\operatorname{deg}(v_i)}} \cdot \operatorname{diag}\left(1_{f_{v_i}^{(k)}>0}\right) \cdot W_k \cdot \sum_{z \in \widetilde{N}(v_i)} \frac{\partial z_j^{(k-1)}}{\partial z_i^{(0)}},
\end{aligned}
\end{eqnarray}
where $f_{v_i}^{(k)}$ is the pre-activated embedding of $z_j^{(k)}$, $W_k$ stands for the last layer weight matrix of GCN. 
Furthermore, we use chain rule to calculate any rank $l$ of the derivative as
\begin{eqnarray}
\begin{aligned}
    \frac{\partial z_j^{(l)}}{\partial z_i^{(0)}} 
    =& \sum_{p=1}^{\lambda}\left[\frac{\partial z_j^{(l)}}{\partial z_i^{(0)}}\right]_p 
    =& \sum_{p=1}^{\lambda} \prod_{m=l}^1 \frac{1}{\widetilde{\operatorname{deg}\left(v_p^m\right)}} \cdot \operatorname{diag}\left(1_{f_{v_p^m}^{(m)}>0}\right) \cdot W_m,
\end{aligned}
\end{eqnarray}
where $\lambda$ is the total number of paths of length $l+1$ from node $v_j$ to $v_i$. For certain path $p$, $v_p^m$ is node $v_i$, $v_p^0$ is node $v_j$ and for $m = 1...(l-1)$, $v_p^(k-1) \in \widetilde{N}(v_p^m)$.
More specifically, we can calculate an entry of the derivative as 
\begin{eqnarray}
\begin{aligned}
\left[\frac{\partial z_j^{(l)}}{\partial z_i^{(0)}}\right]_p^{(a, b)}=\prod_{l=k}^1 \frac{1}{\widetilde{\operatorname{deg}}\left(v_p^m\right)} \sum_{q=1}^{\gamma} R_q \prod_{m=l}^1 w_q^{(m)},
\end{aligned}
\end{eqnarray}
where $\gamma$ is the number of paths $q$ from the $a$ to $b$ in the computation directed acyclic graph (DAG) of $\left[\frac{\partial z_j^{(l)}}{\partial z_i^{(0)}}\right]_p$. For each layer $m$, $w_q^{(m)}$ is the entry of $W_m$ which is used in the $q$-th path. $R_q \in {0,1}$ denote whether the $q$-th path is active or not as the result of ReLU activation of entries of $f_{v_p^m}^{(m)}$'s on the $q$-th path.
Since the variable $R$ follows a Bernoulli distribution, for all $q$, $Pr(R_q = 1) = \rho$, we have 
\begin{equation}
    \mathbb{E}\left[  \left[\frac{\partial z_j^{(l)}}{\partial z_i^{(0)}}\right]_p \right]=\rho \cdot \prod_{m=l}^1 \frac{1}{\widetilde{\operatorname{deg}\left(v_p^m\right)}} \cdot w_q^{(m)}.
\end{equation}
Thus, we know that
\begin{equation}
     \mathbb{E} \left[\frac{\partial z_j^{(l)}}{\partial z_i^{(0)}}\right] = \rho \cdot \prod_{m=l}^1 W_m \cdot \left( \sum_{p=1}^{\lambda} \prod_{m=l}^1 \frac{1}{\widetilde{\operatorname{deg}\left(v_p^m\right)}}  \right).
\end{equation}
On the other hand, $k$-step random walk probability at $v_i$ can be calculated by summing up the probability of all paths of length $k$ from $v_i$ to $v_j$, which is exactly $\sum_{p=1}^{\lambda} \prod_{m=l}^1 \frac{1}{\widetilde{\operatorname{deg}\left(v_p^m\right)}}$. Moreover, the random walk probability starting at $v_i$ to the other nodes sum up to 1. 
By multiplying and normalizing, the conclusion can be immediately derived.

Moreover, the distribution $\pi$ of random walk with restart is $\pi^{(t+1)}=\alpha v+(1-\alpha) A \pi^{(t)}$ in iteration.
We can derive the solution~\citep{gasteiger2018predict} as $\pi=\alpha\left(I-(1-\alpha) A\right)^{-1} v$, which is the same as the calculation equation in Eq.~\ref{eq:InfVal}.
\end{proof}

\section{Time Complexity Comparison} \label{app:Time}

\subsection{Theoretical Comparison}

We now analyze the time complexity of \model based on vanilla MLNC (Sec.~\ref{sec:pipe}). The calculation of influence in Sec.~\ref{sec:forw} (\textbf{P} step) is part of data pre-processing where we can use fast personalized PageRank algorithms designed for large graphs. Thus, the additional time includes two parts: (i) the influence calculation in transformation operation which is the same as the forward calculation of GNN, namely $O(nfd+ed)$, where $f$ is feature dimension and $d$ is hidden dimension of GNNs, $n$ and $e$ are the number of nodes and edges in original graph; (ii) the important score calculation from $G_{LIP}$ costs $O(t \cdot e_{LIP})$, where $t$ is the number of iterations and $e_{LIP}$ is the number of influence edges in $\v{A}_{LIP}$.
Thus, the overall time complexity is $O(n + e + t \cdot e_{LIP})$ since $f, d \ll n, e$. 

We select several representative baselines and analyzed their time complexities, as shown in the Tab.~\ref{tab:timeVS} below.

\begin{table*}[h!]
    \centering
    \caption{The comparison of time complexity between \model and other baselines.}
    \begin{tabular}{c|ccc}
    \hline
        Method & GCN+ML-KNN & MLGW & LANC  \\ \hline
        Complexity & $O(n \cdot \log{n} + k \cdot l + e)$ & $O(n^3+e\cdot f^2)$ & $O(n + e + n \cdot b^2\cdot f)$  \\ \hline
        Method & VariMul & GCN+Auto & \model \\ \hline
        Complexity & $O(n + e + l\cdot \log{l} )$ & $O(n + e)$ & $O(n + e + t \cdot e_{LIP})$ \\ \hline
    \end{tabular}
    \label{tab:timeVS}
\end{table*}

In the table above, $k$ represents the hyperparameter for the k-nearest neighbors algorithm, $l$ stands for the number of labels, $b$ denotes the number of sample neighbors set in LANC, and $f$ refers to the size of its hidden layer. The table shows that our method has almost the lowest time complexity, and experiments confirm that it can scale to large graph datasets effectively.

\subsection{Empirical Computational Cost}

We add a table below showing the average per-epoch training time and GPU memory usage of various methods on graph data from two different domains. 

\begin{table}[!ht]
    \centering
    \caption{Computational cost of each methods, including time and space during training.}
    \resizebox{1.0\textwidth}{!}{
    \begin{tabular}{l|c|cccccc}
    \toprule
        ~ & Cost & ML-KNN & MLGW & LANC & VariMul & GCN+Auto & GCN+LIP \\ \hline
        DBLP & Time (s/epoch) & 0.012 & 5.710 & 2.350 & 1.015 & 0.003 & 0.008 \\ 
        BlogCat & Time (s/epoch) & 0.100 & 9.710 & 3.837 & 2.082 & 0.051 & 0.311 \\ 
        DBLP & GPU mem (MB) & 2200 & 3530 & 2204 & 2980 & 2271 & 1595 \\ 
        BlogCat & GPU mem (MB) & 3010 & 5036 & 2800 & 3572 & 3073 & 2369 \\ \bottomrule
    \end{tabular}
    }
    \label{TimeReal}
\end{table}

As observed, our method aligns with the conclusions analyzed in Appendix B, demonstrating relatively shorter training times and especially lower GPU memory usage. This indicates that our method has better scalability. 
Specifically, our training time is roughly on the same order of magnitude as a standard GCN but takes a few times bigger. However, compared to other baselines, our method typically requires one order of magnitude less time. On the other hand, since the quantification of influence correlations during the T step involves gradient calculations, our method essentially splits the gradient calculation into k smaller steps. This trades off some computation time for reduced memory usage, allowing our method to achieve even lower GPU memory consumption than a standard GCN.

\section{Extended Related Work}

\vpara{Multi-label correlation mining.}
Exploring multi-label correlation in other data forms, e.g. computer vision (CV), is also a key challenge.
These methods can be divided into three categories: first order, second order and high-order~\citep{zhang2010multi}.
First-order methods disregard label correlations, assuming all labels are independent. Binary Relevance (BR)~\citep{Tsoumakas2010MiningMD}, transforms the problem into separate binary classification tasks. Similarly, ML-KNN~\citep{10.1016/j.patcog.2006.12.019} is based on k-nearest-neighbor classification. While these methods are highly efficient, their neglect of label correlations leads to reduced performance.
Second-order methods emphasize pairwise label correlations. Approaches like calibrated label ranking (CLR)~\citep{furnkranz2008multilabel} transform into pairwise ranking problems. While they are generally more effective than first-order approaches at exploiting label correlations, label relationships can be highly complex in the real-world applications.
High-order methods aim to explore higher-order label correlations. Classifier chains (CC)~\citep{10.1007/978-3-642-04174-7_17} trains a chain of binary classifiers, each predicting the current label based on the features and previously predicted labels.
LACO~\citep{zhang2021enhancing} introduces two additional losses for label correlation prediction.
HOMI~\citep{si2023multi} argue that the label matrix is approximately full-rank and use the label correlation to regularize the prediction which is similar to PLAIN~\citep{wang2023deep}.
However, these methods do not analyze the complex non-Euclidean nature of graphs. For image and text data, where data points are independent, label correlations exist within label semantics. For graph data, the topological structure between node sets with different labels makes the analysis more challenging. Moreover, as shown in Fig.~\ref{fig:intro}, we find that MLNC on graphs requires uncovering the \emph{influence correlations} rather than the proximity correlations explored in other works. In other words, there can be both positive and negative influences between labels on graphs, while previous studies could only capture positive label proximity.

\vpara{Label Propagation Methods.}
The Label Propagation Algorithm (LPA) is a classic method whose application extends even beyond graph-based tasks. The fundamental assumption of LPA is that labels vary smoothly over the edges of a graph. As a result, given a graph and the labels of some of its nodes, LPA can infer the labels of the remaining nodes.
However, traditional LPA does not take advantage of node feature information. This limitation has inspired many efforts to incorporate its principles or combine its ideas with Graph Neural Networks (GNNs) to construct powerful models.
Some work leverage the simplicity and effectiveness of the LPA to assist graph models for node classification.
GMNN~\cite{qu2019gmnn} models the joint distribution of labels with node features using Conditional Random Fields (CRF)~\cite{lafferty2001conditional} and employs a variational EM framework~\cite{neal1998view} for efficient training. In the E-step, a GNN learns representation to approximate the posterior distribution of labels. In the M-step, another GNN models the dependencies between labels. The M-step is quite similar to LPA, except that it is learnable and nonlinear.
C\&S~\cite{huang2020combining} proposes combining shallow models that ignore graph structure with two post-processing methods including error correction (correct) and label propagation (smooth), resulting in significant performance improvements with a simple model.
GCN-LPA~\cite{wang2021combining} theoretically analyzes the relationship between GCN and LPA, proving that edge weights that enhance LPA can also improve GCN. Based on this insight, they proposed the GCN-LPA model, which incorporates LPA as a regularization term for GCN.
Another line of work uses LPA to correct noisy labels.
LPM~\cite{xia2021towards} utilize the intrinsic graph structure to propagate labels and combines meta-learning to aggregate labels.
Furthermore, R2LP~\cite{cheng2024resurrecting} extends the use of LPA to arbitrary heterophily levels, simultaneously propagating labels to unlabeled nodes and correcting noisy labels. R2LP not only generalizes LPA to more realistic scenarios involving heterophily graphs and varying noise levels but also provides a theoretical analysis of its effectiveness on label denoising. 

The differences between our Label Influence Propagation (LIP) and the LPA-related works are threefold: Propagation target: We do not propagate labels; instead, we propagate the influence between labels.
Propagation medium: In LPA, the propagation medium is typically the graph structure that connects nodes. In contrast, the propagation medium in LIP is the label propagation graph, constructed by quantified pairwise label influences. In this graph, nodes are labels, and edges are the influence correlations between labels.
Purpose of propagation: The purpose of propagating label influence is not to infer unknown labels or denoise existing labels, but rather to extend the computed pairwise label influences to high-order label influences. This ultimately encourages labels with positive influence and suppresses those with negative influence.




\section{Datasets} \label{app:datasets}
To validate the proposed framework, we conduct experiments on 8 datasets, including 3 classical MLNC datasets (Delve-M~\citep{8970680}, DBLP~\citep{8970680}, BlogCat~\citep{8894391}), 1 private dataset (Abnormal), 1 large scale OGB dataset (Ogbn-proteins~\citep{hu2020open}, OGB-p in short), 3 new biological datasets (PCG, HumLoc, EukLoc~\citep{zhao2023multi}) from different domains. 
The private one is a real-world anonymous graph multi-label anomaly detection dataset under the premise of ensuring data privacy. At present, the data Abnormal is not yet publicly available, but it may be released in the form of benchmarks in the future. 
The statistics of these datasets are in following Tab.~\ref{tab:stat}.

\vpara{BlogCat.} The data lacks inherent node features, so feature initialization is required.
Although the paper~\citep{zhao2023multi} uses an identity matrix to initialize node features on BlogCat, their experimental results show that DeepWalk~\citep{Perozzi_2014} significantly outperforms other methods using identity matrix initialization.
Thus, we use DeepWalk embedding as node feature of BlogCat in Tab.~\ref{tab:main} and Fig.~\ref{fig:analysis}.
Moreover, we find that another frequently used feature initialization method on BlogCat is to use structure-base embedding~\citep{qiu2020gcc}. Hence we use their structure-based initialization method in Tab.~\ref{tab:normlabel} and Tab.~\ref{tab:fewlabel}.


\begin{table*}[!ht]
    \centering    
    \caption{The statistics of all the MLNC graph datasets.}
    \resizebox{1.0\textwidth}{!}{
    \begin{tabular}{c|cccc}
    \toprule
        ~ & DBLP & BlogCat & OGB-p & PCG \\ \midrule
        \# nodes & 28,702 & 10,312 & 132,534 & 3,233 \\ 
        \# edges & 68,335 & 667,966 & 39,561,252 & 37,352 \\ 
        \# labels & 4 & 39 & 112 & 15 \\ 
        $r_\text{homo}$ & 0.76 & 0.10 & 0.15 & 0.17 \\ \midrule
        Domain & Citation Network & Social Network & PPI & PPI \\ 
        Node & Author & Blogger & Protein & Protein \\ 
        Edge & Co-authorship & Friendship & Biological associations & Biological associations \\ 
        Label & Research areas & Interests & Functions & Phenotypes \\ \bottomrule
        ~ & HumLoc & EukLoc & Delve-M & Abnomal \\ \midrule
        \# nodes & 3,106 & 7,766 & 1,229,280 & 0.1 $\sim$ 10M \\ 
        \# edges & 18,496 & 13,818 & 4,322,275 & 10 $\sim$ 100M \\ 
        \# labels & 14 & 22 & 20 & 11 \\ 
        $r_\text{homo}$ & 0.42 & 0.46 & 0.65 & 0.05 $\sim$ 0.20 \\ \midrule
        Domain & PPI & PPI & Wikipidia Network & Anomaly detection \\ 
        Node & Protein & Protein & Paper & User \\ 
        Edge & Biological associations & Biological associations & Citation & -- \\ 
        Label & Subcellular locations & Subcellular locations & Topics & Anomaly \\ \bottomrule
    \end{tabular}
    }
    \label{tab:stat}
\end{table*}

\section{More Experiments} \label{app:allother}

\subsection{Settings and Details} \label{app:repro} \label{app:settings}
\vpara{Reproducibility}
Experiments are conducted using 2 NVIDIA 3090 GPUs. Each experiment is replicated five times, utilizing different seeds for each run to ensure robustness and reproducibility. 
We reproduce the baselines by the released code or corresponding description in their paper. And we follow the hyper-parameter setting suggested in their papers to ensure the fairness of comparison.
The implementation settings and details are uploaded in the anonymous repository: \url{https://github.com/Xtra-Computing/LIP_MLNC}.

\vpara{Evaluation metrics.} 
Macro AUC and AP are commonly used non-parametric metrics in prior works. 
Macro AUC is the area under the ROC curve, and it provides an aggregate measure of performance across all possible classification thresholds, reflecting the model's ability to distinguish between positive and negative classes. 
AUC is the official metric provided for comparison on the OGB leaderboard~\citep{hu2020open}, and it is also used in benchmarks ~\citep{tang2023gadbench} for highly imbalanced scenarios.
Moreover, as noted in~\citep{zhao2023multi,yang2015evaluating,sun2025handling,sun2024fine}, AUC can be misleading for highly imbalanced datasets. 
~\citep{zhao2023multi} also reveals that OGB-p has a nearly 90\% of unlabeled nodes, indicating extreme imbalance for most labels. 
Thus, we also report Macro F1 and AP.
Macro F1 computes the F1 score for each class independently and then takes the average. 
AP (Average Precision) is a performance metric that summarizes the precision-recall curve by computing the weighted mean of precisions at different thresholds. It provides an overall measure of a model’s ability to balance precision and recall, particularly useful in imbalanced datasets.
Higher values of these metrics indicates superior performance of \model.

Note that AP, F1 and AUC can show significant differences due to their focus on different aspects of classification performance. F1 is sensitive to the balance between precision and recall, making it crucial for imbalanced datasets. AUC evaluates the overall ability to discriminate between classes across all thresholds. Therefore, a model might have a high AUC but a lower F1 if it struggles with precision or recall for the positive class. 
Moreover, AUPR (Area Under the Precision-Recall Curve) is also a well-known classification metric. However, AUPR may overestimate model performance when the number of thresholds or unique prediction values is limited. Hence we omit this metric.

\vpara{Hyper-parameter settings.} For training process, we use Adam optimizer with early stopping at 100 epochs to train \model. 
Moreover, other hyper-parameters are decided using random search strategy and the range of hyper-parameters are listed in Tab.~\ref{tab:para}. When comparing with other baselines, we set the same number of layers for the backbone if the same backbone is used.

\vpara{Backbones.} 
As shown in the Tab.~\ref{tab:stat}, our datasets include both homophily and heterophily datasets, so we used four different backbones to validate the effectiveness of our method.
Among them, GCN~\citep{kipf2017semisupervised}, GAT~\citep{velivckovic2017graph}, and GraphSage~\citep{hamilton2017inductive} are commonly used backbones for homophily graphs, while H2GCN~\citep{zhu2020beyond} is a popular model for heterophily graphs.

\begin{table*}[h!]
\centering
\caption{The hyperparameter setting in this paper for all datasets}
    \resizebox{0.6\textwidth}{!}{
\begin{tabular}{r|c} \toprule
Hyper-parameter        & Range\/Value               \\ 
\midrule
     Hidden size       & \{32,64,128,256,512\} \\
     Learning Rate  & \{1e-3 $\to$ 5e-1\}  \\
     Weight decay   & \{1e-2,5e-3,1e-4,5e-4,1e-5,5e-6,1e-7,0\}  \\
     Dropout rate   & \{0 $\to$ 0.8\}   \\
    \midrule
     Optimizer  & Adam  \\
     Epoch      & 1000 \\
     Early stopping patience & 100 \\
\bottomrule  
\end{tabular}
}
\label{tab:para}
\end{table*}

\subsection{Evaluation under Label Split} \label{app:moreExp}
\vpara{Two split settings.} 
We employ two types of split settings to show the robustness of our method on MLNC task.
The first setting is node split, where the dataset is divided into training, validation, and test sets based on nodes, meaning that every node in the training set has full label information. This corresponds to the common single-label split method. However, in real-world scenarios, labels often come from different sources. For instance, in biological PPI networks, a protein may not have all its functions or phenotypes collected, or different phenotypes might be annotated by different teams, meaning not every node in a batch has all labels. Thus, the second setting, label split, was introduced: each label is assigned to the same number of nodes. Since multi-label classification on graph data is inherently transductive—where all node features and the graph structure are known—there is no risk of data leakage. This setting better reflects real-world scenarios.
Note that split setting is not split ratio setting which means different proportions of training samples.
Please refer to Tab.~\ref{tab:main} for node split setting. Here in appendix, we show the performance under label split setting in Tab.~\ref{tab:normlabel} and Tab.~\ref{tab:fewlabel}.

\vpara{Results.} In our experiments, we employed two different split ratios, specifically, $6:2:2$ and $2:2:6$. 
Under the same split ratio, we repeat the random splitting process with different random seeds five times.

Due to its abundance of labels, better reveals whether the model truly uncovers the rich relationships among labels. 
From the Tab.~\ref{tab:normlabel}, we can draw several conclusions. First and foremost, our method is the most effective one most of the time, which verifies the effectiveness of our approach. 

\begin{table*}[h!]
\centering
\caption{Performance under \emph{label split} setting with the split of 6:2:2 (mean ± std. deviation) in terms of macro F1 , macro ROC\_AUC. The top performance are bolded and second ones are underlined.}
\resizebox{1.0\textwidth}{!}{
\begin{tabular}{c|cc|cc|cc|cc} \toprule
                          & \multicolumn{2}{c}{DBLP}                            & \multicolumn{2}{c}{Delve-M}                         & \multicolumn{2}{c}{BlogCat}          & \multicolumn{2}{c}{Abnomal}              \\
Metrics                   & \multicolumn{1}{c}{Macro F1}       & Macro AUC      & \multicolumn{1}{c}{Macro F1}       & Macro AUC      & \multicolumn{1}{c}{Macro F1} & Macro AUC & \multicolumn{1}{c}{Macro F1} & Macro AUC \\ \midrule
Node2vec+BR               & \multicolumn{1}{c}{75.92$\pm$0.34}          & 83.78$\pm$0.48          & \multicolumn{1}{c}{50.34$\pm$1.11}          & 64.03$\pm$1.49          & \multicolumn{1}{c}{52.01$\pm$1.41}          & 60.08$\pm$0.97          & \multicolumn{1}{c}{43.55$\pm$1.88}          & 76.59$\pm$1.17          \\
Node2vec+CC               & \multicolumn{1}{c}{73.61$\pm$0.31}          & 80.34$\pm$0.32          & \multicolumn{1}{c}{50.66$\pm$1.29}          & 65.47$\pm$1.44          & \multicolumn{1}{c}{52.22$\pm$1.39}          & 60.59$\pm$1.58          & \multicolumn{1}{c}{45.12$\pm$.143}          & 79.28$\pm$1.44          \\
GCN+ML-KNN                & \multicolumn{1}{c}{77.64$\pm$0.35}          & 86.62$\pm$0.41          & \multicolumn{1}{c}{51.98$\pm$1.24}          & 69.45$\pm$1.81          & \multicolumn{1}{c}{53.61$\pm$1.14}          & 63.56$\pm$2.62          & \multicolumn{1}{c}{47.22$\pm$3.24}          & 80.28$\pm$1.81          \\
GCN+PLAIN                 & \multicolumn{1}{c}{82.27$\pm$0.34}          & 91.28$\pm$0.44          & \multicolumn{1}{c}{53.61$\pm$1.74}          & 78.72$\pm$1.57          & \multicolumn{1}{c}{56.30$\pm$2.29}          & 70.69$\pm$1.38          & \multicolumn{1}{c}{52.16$\pm$3.27}          & {\ul 93.23$\pm$1.89}    \\ \midrule
MLGW                      & \multicolumn{1}{c}{79.13$\pm$0.66}          & 90.14$\pm$0.63          & \multicolumn{1}{c}{52.25$\pm$0.43}          & 74.15$\pm$0.18          & \multicolumn{1}{c}{52.97$\pm$1.20}          & 68.70$\pm$1.14          & \multicolumn{1}{c}{49.12$\pm$2.27}          & 82.14$\pm$1.92          \\
ML-GCN                    & \multicolumn{1}{c}{78.64$\pm$0.44}          & 89.30$\pm$0.28          & \multicolumn{1}{c}{51.92$\pm$0.34}          & 83.21$\pm$0.84          & \multicolumn{1}{c}{55.29$\pm$0.22}          & 70.64$\pm$0.68          & \multicolumn{1}{c}{51.27$\pm$1.30}          & 86.61$\pm$1.21          \\
LARN                      & \multicolumn{1}{c}{83.86$\pm$0.28}          & 92.77$\pm$0.33          & \multicolumn{1}{c}{54.31$\pm$1.29}          & 85.85$\pm$1.15          & \multicolumn{1}{c}{55.61$\pm$1.11}          & {\ul 72.44$\pm$2.12}    & \multicolumn{1}{c}{51.34$\pm$1.44}          & 88.61$\pm$1.49          \\
LANC                      & \multicolumn{1}{c}{80.37$\pm$0.62}          & 89.58$\pm$0.49          & \multicolumn{1}{c}{55.14$\pm$0.58}          & 86.48$\pm$0.62          & \multicolumn{1}{c}{54.48$\pm$0.84}          & 70.55$\pm$1.04          & \multicolumn{1}{c}{50.23$\pm$0.66}          & 84.02$\pm$0.33          \\
VariMul                   & \multicolumn{1}{c}{84.81$\pm$0.67}          & 93.17$\pm$0.38          & \multicolumn{1}{c}{56.81$\pm$0.23}          & 86.91$\pm$0.13          & \multicolumn{1}{c}{{\ul 57.88$\pm$1.34}}    & 71.35$\pm$0.49          & \multicolumn{1}{c}{52.32$\pm$0.50}          & 88.56$\pm$0.36          \\ \midrule
GCN+Com                   & \multicolumn{1}{c}{{\ul 86.00$\pm$0.24}}    & \textbf{94.00$\pm$0.05} & \multicolumn{1}{c}{{\ul 56.82$\pm$0.28}}    & {\ul 88.74$\pm$0.12}    & \multicolumn{1}{c}{57.39$\pm$1.75}          & 70.29$\pm$2.38          & \multicolumn{1}{c}{50.86$\pm$0.25}          & 87.08$\pm$0.26          \\
GCN+Auto                  & \multicolumn{1}{c}{85.43$\pm$0.10}          & 93.47$\pm$0.14          & \multicolumn{1}{c}{54.71$\pm$0.29}          & 79.08$\pm$0.17          & \multicolumn{1}{c}{53.44$\pm$1.50}          & 66.41$\pm$2.37          & \multicolumn{1}{c}{{\ul 52.35$\pm$0.23}}    & 90.31$\pm$0.31          \\
GCN+\model & \multicolumn{1}{c}{\textbf{86.50$\pm$0.34}} & {\ul 93.91$\pm$0.19}    & \multicolumn{1}{c}{\textbf{58.87$\pm$1.26}} & \textbf{90.89$\pm$1.14} & \multicolumn{1}{c}{\textbf{59.02$\pm$0.21}} & \textbf{74.30$\pm$0.36} & \multicolumn{1}{c}{\textbf{53.98$\pm$0.34}} & \textbf{94.91$\pm$0.45} \\ \bottomrule  
\end{tabular}
}
\label{tab:normlabel}
\end{table*}

Here we show the second setting of training label ratio in Tab.~\ref{tab:fewlabel}. We simulated a scenario with sparse training samples, as often seen in real-world applications. In this setting, the limited training labels make it more challenging to capture label relationships. Methods that introduce labels as new nodes and construct a new graph with label-node edges are more affected, as fewer training labels result in fewer connections between the new label nodes and the original nodes, potentially leading to poorer performance.

\begin{table*}[h!]
\centering
\caption{Performance under \emph{label split} setting with the split of 2:2:6 (mean ± std. deviation) in terms of macro F1, macro ROC\_AUC. The top performance are bolded and second ones are underlined.}
\resizebox{1.0\textwidth}{!}{
\begin{tabular}{c|cc|cc|cc|cc} \toprule
                          & \multicolumn{2}{c}{DBLP}                            & \multicolumn{2}{c}{Delve-M}                         & \multicolumn{2}{c}{BlogCat}          & \multicolumn{2}{c}{Abnomal}              \\
Metrics                   & \multicolumn{1}{c}{Macro F1}       & Macro AUC      & \multicolumn{1}{c}{Macro F1}       & Macro AUC      & \multicolumn{1}{c}{Macro F1} & Macro AUC & \multicolumn{1}{c}{Macro F1} & Macro AUC \\ \midrule
Node2vec+BR               & 72.02$\pm$1.15          & 77.76$\pm$1.14          & 40.19$\pm$0.82          & 53.42$\pm$1.54                & 48.33$\pm$0.37          & 54.08$\pm$1.41          & 45.51$\pm$1.82          & 71.32$\pm$2.12          \\
Node2vec+CC               & 71.52$\pm$2.14          & 75.59$\pm$1.59          & 42.45$\pm$1.03          & 56.56$\pm$1.44                & 49.57$\pm$1.51          & 55.84$\pm$1.00          & 45.41$\pm$2.88          & 71.33$\pm$2.32          \\  \midrule
GCN+ML-KNN                & 74.43$\pm$2.12          & 79.59$\pm$2.18          & 45.34$\pm$1.76          & 57.49$\pm$2.29                & 51.08$\pm$1.82          & 59.24$\pm$1.22          & 45.81$\pm$1.72          & 73.39$\pm$0.33          \\ 
GCN+PLAIN                 & 79.34$\pm$1.32          & 88.67$\pm$0.52          & 49.23$\pm$1.25          & 61.25$\pm$1.54                & 55.34$\pm$1.26          & 67.35$\pm$2.11          & 48.23$\pm$1.32          & 87.11$\pm$2.42          \\ \midrule
MLGW                      & 78.62$\pm$2.27          & 85.95$\pm$1.19          & 47.24$\pm$2.56          & 57.83$\pm$2.13                & 51.82$\pm$0.36          & 60.48$\pm$0.24          & 45.68$\pm$1.15          & 74.35$\pm$1.29          \\
ML-GCN                    & 78.62$\pm$1.45          & 83.63$\pm$1.21          & 49.37$\pm$1.12          & 58.00$\pm$1.18                & 53.55$\pm$1.03          & 61.77$\pm$1.29          & 47.57$\pm$1.44          & 77.24$\pm$1.18          \\
LARN                      & 81.45$\pm$0.74          & \textbf{92.47$\pm$1.83} & {\ul 50.71$\pm$1.98}    & 62.08$\pm$1.86                & 54.11$\pm$1.66          & 63.56$\pm$1.62          & 49.35$\pm$1.22          & 80.22$\pm$1.57          \\
LANC                      & 79.63$\pm$0.91          & 88.43$\pm$0.72          & 49.80$\pm$1.12          & 57.85$\pm$3.18                & 54.18$\pm$1.82          & 62.18$\pm$1.87          & 49.24$\pm$0.88          & 83.52$\pm$1.41          \\
VariMul                   & 83.23$\pm$0.43          & 91.06$\pm$1.18          & 50.39$\pm$1.18          & 59.08$\pm$1.27                & {\ul 57.02$\pm$1.23}    & {\ul 69.31$\pm$1.18}    & {\ul 50.54$\pm$1.05}    & {\ul 88.51$\pm$2.15}    \\ \midrule
GCN+Auto                  & {\ul 83.96$\pm$0.13}    & 91.26$\pm$0.16          & 47.17$\pm$0.32          & {\ul 64.44$\pm$2.69}                & 56.34$\pm$1.27          & 62.64$\pm$1.01          & 48.14$\pm$1.04          & 85.12$\pm$1.51          \\
GCN+\model & \textbf{84.05$\pm$0.19} & {\ul 92.21$\pm$0.28}    & \textbf{52.56$\pm$1.21} & \textbf{69.33$\pm$1.34}       & \textbf{57.07$\pm$0.43} & \textbf{72.40$\pm$0.66} & \textbf{51.91$\pm$1.18} & \textbf{90.58$\pm$0.52}   \\ \bottomrule  
\end{tabular}
}
\label{tab:fewlabel}
\end{table*}

As shown in Tab.~\ref{tab:fewlabel}, our method once again achieved the best performance, with greater improvement over the baseline compared to scenarios with richer training samples.

\subsection{Changing Backbone of \model}

As shown in Fig.~\ref{fig:AppendExp}(a), our model consistantly achieves improvement when applied to any backbone, demonstrating the effectiveness of our approach. The difference in performance among backbones not using our method depends entirely on the inductive bias inherent in the graph data itself. An interesting observation is that different backbone models exhibit varying degrees of sensitivity to our method. For instance, GCN show greater improvements before and after the application of our method compared to GAT. We think that this may be due to our method's relation to gradient calculation, with different models having varying sensitivities to labels. Possibly, during training, GAT's gradient changes more rapidly, leading to quicker changes in label correlation. Moreover, since our method can be flexibly applied to any backbone capable of obtaining node embeddings, it allows our approach to serve as a plug-and-play enhancement for MLNC.

\begin{table}[!ht]
    \centering
    \caption{Changing backbones (AUC) under node split on MLNC datasets.}
    \resizebox{0.75\textwidth}{!}{
    \begin{tabular}{l|c c c c}
    \toprule
        AUC & DBLP & BlogCat & PCG & EukLoc \\ \hline
        GCN & 92.83 $\pm$ 1.13 & 66.14 $\pm$ 1.74 & 59.54 $\pm$ 0.90 & 70.53 $\pm$ 1.97 \\ 
        GCN+LIP & 94.38 $\pm$ 1.51 & 70.21 $\pm$ 2.02 & 65.73 $\pm$ 0.52 & 72.92 $\pm$ 1.82 \\ \hline
        GIN & 93.00 $\pm$ 0.46 & 68.32 $\pm$ 0.67 & 63.44 $\pm$ 1.15 & 73.13 $\pm$ 1.24 \\ 
        GIN+LIP & 94.75 $\pm$ 1.29 & 70.87 $\pm$ 0.93 & 66.10 $\pm$ 1.64 & 75.10 $\pm$ 1.29 \\ \hline
        APPNP & 94.17 $\pm$ 0.92 & 70.33 $\pm$ 1.10 & 64.96 $\pm$ 1.33 & 74.67 $\pm$ 0.98 \\ 
        APPNP+LIP & 95.21 $\pm$ 1.08 & 71.82 $\pm$ 1.45 & 65.51 $\pm$ 1.74 & 75.86 $\pm$ 1.02 \\ \hline
        GPRGNN & 93.09 $\pm$ 1.12 & 68.31 $\pm$ 1.26 & 68.02 $\pm$ 1.17 & 72.91 $\pm$ 0.99 \\ 
        GPRGNN+LIP & 95.07 $\pm$ 1.84 & 72.36 $\pm$ 0.97 & 68.74 $\pm$ 1.58 & 74.88 $\pm$ 1.06 \\ \bottomrule
    \end{tabular}
    }
    \label{BackbonesApp}
\end{table}

Moreover, we further conducte a series of experiments by replacing the backbone with more advanced decoupled GNNs: APPNP~\cite{gasteiger2018predict} and GPRGNN~\cite{chien2020adaptive}. APPNP is a classical yet powerful model, while GPRGNN achieve state-of-the-art (SOTA) on both homophily and heterophily graphs. 
Here, we evaluate the effectiveness of our method, LIP, on these advanced decoupled GNNs and present a comparison of the results between the original models and the LIP-enhanced models on MLNC tasks in the Tab.~\ref{BackbonesApp}.

From the Tab.~\ref{BackbonesApp}, we can observe that even for high-performance decoupled GNN models, LIP consistently achieves performance improvements. LIP provides enhancements across different domains and demonstrates exceptional results on certain datasets. For example, LIP improved the performance of GPRGNN by approximately 4\% on the BlogCat dataset.
In many cases, the performance improvement brought by LIP exceeds the performance differences between different models, highlighting the value of incorporating LIP.
This is because these backbones primarily focus on modeling the input graph structure and node features. Regardless of the type of GNNs used, our LIP can provide additional support by modeling the influence correlation between labels.

\begin{figure*}[h!]
    \centering
    \includegraphics[width=0.85\linewidth]{figs/AppendExp.pdf}
    \caption{More model analysis on different datasets.}
    \label{fig:AppendExp}
\end{figure*}

\subsection{Ablation study}

\vpara{Normalization.}
As shown in Fig.~\ref{fig:AppendExp}(b), the softmax normalization we use is more effective. From our empirical analysis, Softmax offers two key advantages compared to Row normalization. First, Softmax provides stronger differentiation. In the case of the influence matrix, failing to capture differences in influence may result in similar learning weights for labels that either negatively impact or provide only minimal positive impact, potentially leading to a vicious cycle. Second, Softmax provides stable gradients, contributing to smoother model optimization. In contrast, Row normalization, being a simple normalization operation, may gradually flatten the differences in influence relationships over successive iterations.

\vpara{Influence Correlations in P and T steps.}
We supplement the ablation study across all datasets (the full version of Fig.~\ref{fig:analysis}b), which is shown in the Tab.~\ref{InfluenceDir}. 
"None" stands for simply using the backbone model without any quantification of influence correlations between labels.

\begin{table}[!ht]
    \centering
    \caption{Ablation study of label influences (GCN as backbone).}
    \resizebox{1.0\textwidth}{!}{
    \begin{tabular}{l|cccccc}
    \toprule
        AUC & DBLP & BlogCat & OGB-p & PCG & HumLoc & EukLoc \\ \hline
        None & 92.83 $\pm$ 1.13 & 66.14 $\pm$ 1.74 & 71.26 $\pm$ 1.45 & 59.54 $\pm$ 0.90 & 66.57 $\pm$ 0.67 & 69.27 $\pm$ 1.97 \\ 
        Only P & 92.08 $\pm$ 1.06 & 68.27 $\pm$ 1.88 & 73.72 $\pm$ 0.63 & 62.01 $\pm$ 1.21 & 70.17 $\pm$ 1.42 & 70.87 $\pm$ 1.64 \\ 
        Only T & 93.94 $\pm$ 1.00 & 67.11 $\pm$ 1.51 & 73.58 $\pm$ 1.24 & 63.82 $\pm$ 1.04 & 69.30 $\pm$ 1.02 & 69.93 $\pm$ 1.01 \\ 
        All & \textbf{94.38 $\pm$ 1.51} & \textbf{70.21 $\pm$ 2.02} & \textbf{74.82 $\pm$ 0.34} & \textbf{65.73 $\pm$ 0.52} & \textbf{73.22 $\pm$ 1.76} & \textbf{72.92 $\pm$ 1.82} \\ \bottomrule
    \end{tabular}
    }
    \label{InfluenceDir}
\end{table}

It can be observed that in all cases, utilizing the influence correlations from both propagation (P) and transformation (T) steps (noted as All in the table) achieves the best performance than using the influence from either phase alone.
Moreover, individually quantifying and utilizing either type of influence correlations yields better performance than not using them at all.
This indicates that leveraging the influence relationships from both the P and T steps is crucial for the MLNC task.
Additionally, the table reveals that using the influence correlations from either the P step or the T step alone can achieve better results on different datasets. We hypothesize that this is due to the varying demands of different datasets for the P and T processes. Some datasets may require minimizing negative influence during the P process, while others may benefit from maximizing positive influence during the T process.

\subsection{Sensitivity Study}

\vpara{Settings.} We examine the sensitivity of the model to hyper-parameters by varying the values of $\alpha$ and $\beta$.
We draw inspiration from APPNP’s~\citep{gasteiger2018predict} experimental setup.
When we vary $\alpha$, $\beta$ is fixed at 0.28; when we vary $\beta$, $\alpha$ is fixed at 0.15.

\vpara{Results.} The conclusion is that our method demonstrates robustness across a range of hyper-parameter settings. Specifically, the performance remains stable within reasonable parameter ranges, indicating that the model does not heavily depend on fine-tuned hyper-parameters for achieving effective results. Moreover, we find that regardless of the dataset characteristics, the optimal range for $\alpha$ is approximately between 0.08 and 0.30, while the optimal range for $\beta$ is approximately between 0.10 and 0.56.

\begin{figure*}[h!]
    \centering
    \includegraphics[width=0.95\linewidth]{figs/hyper1.png}
    \caption{Changing hyper-parameters of $\alpha$ and $\beta$ on datasets DBLP and PCG.}
    \label{fig:hyper}
\end{figure*}

\subsection{Evaluation under Inductive Setting}

\vpara{Settings.} In this section, we perform the inductive train/val/test split (6:2:2) and conducted experiments to validate the effectiveness in such setting.
We used GraphSage as the backbone because it is naturally suited for the inductive setting. We also excluded certain baselines that are not suitable for the setting, such as VariMul and MLGW.

\vpara{Results.} The results are shown in the Tab.~\ref{inductive}. It shows that our method also achieves satisfactory performance under the inductive setting. Although the model cannot observe the complete graph in the inductive setting, the subgraph containing the nodes whose labels need to be predicted is visible. Therefore, our model’s quantification of influence correlation during the P step remains meaningful and effective. Since inductive training also involves mutual interactions between gradients of different labels, modeling the influence relationships between labels in the T step is also necessary. As a result, our method achieves performance surpassing the baselines even under the inductive setting.

\begin{table}[!ht]
    \centering
    \caption{Performance (AUC) comparison under inductive setting.}
    \resizebox{1.0\textwidth}{!}{
    \begin{tabular}{l|cccccc}
    \toprule
        ~ & SAGE+ML-KNN & SAGE+PLAIN & LARN & LANC & SAGE+Auto & SAGE+LIP \\ \hline
        DBLP & 72.45 $\pm$ 1.77 & 74.16 $\pm$ 0.91 & 73.87 $\pm$ 1.79 & 73.54 $\pm$ 1.95 & 77.11 $\pm$ 2.42 & \textbf{79.32 $\pm$ 1.96} \\ 
        EukLoc & 53.31 $\pm$ 1.51 & 55.02 $\pm$ 1.73 & 63.39 $\pm$ 2.01 & 65.97 $\pm$ 1.67 & 62.82 $\pm$ 2.19 & \textbf{66.35 $\pm$ 2.36} \\ \bottomrule
    \end{tabular}
    }
    \label{inductive}
\end{table}

\subsection{Changing Number of Labels}

\vpara{Settings.} We used GCN as the backbone to compare and analyze the performance of our method across different label number of label categories $k$.

\vpara{Results.} From the Tab.~\ref{NumLabel}, we can observe that our method consistently achieves performance boosting regardless of the number of labels. We also identified some interesting phenomena from the experiments. First, for GCN, the performance is actually the best when the number of labels is 2. This phenomenon can be understood from two perspectives:
On one hand, statistical analysis shows that label 2 is the category with the highest number of training nodes across all training datasets. Therefore, when the total number of labels is 2, the contribution of label 2 leads to a higher average performance. On the other hand, this also indicates that, to some extent, an increase in the number of labels results in a decline in GCN’s prediction performance.

Moreover, with our method, the performance generally improves as the number of labels increases. In contrast, GCN’s performance fluctuates as the number of labels increases, sometimes improving and sometimes deteriorating. This indirectly indicates that our method reduces the negative influence between labels while enhancing the positive influence. 

\begin{table}[!ht]
    \centering
    \caption{Performance (AUC) when changing the number of label categories $k$ on HumLoc.}
    \resizebox{1.0\textwidth}{!}{
    \begin{tabular}{l|ccccccc}
    \toprule
        HumLoc (AUC) & 2 & 4 & 6 & 8 & 10 & 12 & 14 \\ \hline
        GCN & 70.29 $\pm$ 1.74 & 64.44 $\pm$ 0.92 & 66.35 $\pm$ 1.49 & 65.14 $\pm$ 1.85 & 66.52 $\pm$ 1.22 & 66.73 $\pm$ 0.86 & 68.14 $\pm$ 1.88 \\ 
        GCN+LIP & 72.17 $\pm$ 1.56 & 65.99 $\pm$ 1.02 & 68.41 $\pm$ 1.55 & 68.82 $\pm$ 0.97 & 72.13 $\pm$ 1.29 & 72.22 $\pm$ 1.76 & 73.22 $\pm$ 1.76 \\ \bottomrule
    \end{tabular}
    }
    \label{NumLabel}
\end{table}

\subsection{Case Study}

\vpara{Settings.} To better illustrate the positive and negative mutual influences between labels, we take a specific node and its surrounding neighbors as an example to demonstrate the mutual influences between labels. We aim to show how these influences contribute to either improving or impairing the prediction performance of the node. 
We selected a node with a degree of 3 from the DBLP dataset, identified as node 197. Its adjacency structure in the original graph is shown in the Fig.~\ref{fig:case}.

\vpara{Analysis.} It is worth noting that, as discussed in Sec.~\ref{sec:method} of the main text, the influence between labels is quite complex. The influence from $Y_a$ to $Y_b$ is the combined effect of a group of nodes with label $y_a$ on a group of nodes with label $y_b$ during both the P and T processes. Here, we attempt to visualize the positive and negative influences, as well as their ultimate effect on performance, using the local structure of a single node rather than a group of nodes with the same label. This provides a simplified perspective on these interactions.

\begin{wrapfigure}{r}{0.47\textwidth}
    \centering
    \includegraphics[width=0.45\textwidth]{figs/CaseStudy.pdf}
    \setlength{\belowcaptionskip}{-0.35cm}
    \caption{Case study on node 197 from dataset DBLP.}
    \label{fig:case}
\end{wrapfigure}

From the local structure of node 197, we can observe its connections to nodes 0, 18566, and 22, along with their respective node indices and ground truth labels, as shown in the figure. However, when GCN predicts the labels for node 197, the resulting predicted probabilities are shown in the bar chart. Based on the observations from Fig. 1(a), we can roughly infer the reasons for this phenomenon:

Positive and Negative Influences: From Fig. 1(a), it is evident that the other three labels generally provide positive support for label 0 and label 2, while they mostly exert negative influence on label 1 and label 3. This implies that the influence from other labels makes it easier for node 197 to be correctly identified as label 0 or label 2, but not necessarily for label 1 or label 3.

Localized Negative Influence: Among the neighboring nodes of node 197, only node 18566 has label 1, which is different from its surroundings. Consequently, its negative influence is received by node 197 during the P process, resulting in label 1’s probability exceeding 0.5.

Impact on label 3: The high predicted probability for label 3 is likely due to the influence of other labels during backpropagation, which affects the GNN parameters. This, in turn, increases the probability of label 3 during the T process.
In conclusion, the combined effects of positive and negative influences ultimately impact the final prediction. As a result, Labels 0 and 2 are correctly predicted for node 197, while Labels 1 and 3 are incorrectly predicted.

\section{Limitations and Future Work}

From a scenario perspective, our work currently focuses on the most common case: static and homogeneous graphs. That is, the nodes, edges, and labels do not change over time, and all nodes in the dataset are of the same type, with all edges sharing the same physical meaning.
From a goal perspective, our work is currently focused on improving the performance of MLNC tasks by analyzing and quantifying the influence correlations between labels on graph datasets, without considering potential noise in the labels or graph structure. 
We leave these complex challenges in both aspects for future exploration.

\end{document}